\newcommand{\der}{\mathrm{d}}
\newcommand{\funcs}{\mathcal{F}}
\newcommand{\dists}{\mathcal{Q}}
\newcommand{\KL}{D_{\mathrm{KL}}}
\newcommand{\E}{\mathbb{E}}
\newcommand{\R}{\mathbb{R}}
\newcommand{\ent}{\mathrm{H}}
\newcommand{\diag}{\mathrm{diag}}
\newcommand{\mvec}{\mathrm{vec}}
\newcommand{\PSD}{\mathcal{S}_{+}}
\newcommand{\Wu}{W^{\mathrm{unif}}}
\newcommand{\Wg}{W^{\mathrm{gauss}}}
\newcommand{\parents}{\mathrm{par}}
\newcommand{\child}{\mathrm{ch}}
\newcommand{\Norm}{\mathcal{N}}
\newcommand{\Wish}{\mathrm{Wishart}}
\newcommand{\varobj}{\mathcal{L}}
\newcommand{\logl}{\mathcal{L}_0}
\newcommand{\Xmat}{\mathrm{X}}
\newcommand{\lmang}{\left\langle\left\langle}
\newcommand{\rmang}{\right\rangle\right\rangle}
\newcommand{\Fspec}{F^{\mathrm{spec}}}
\newtheorem{thm}{Theorem}[section]
\newtheorem{cor}{Corollary}[section]
\begin{document}

\begin{frontmatter}

\title{Variational Consensus Monte Carlo}
\runtitle{Variational Consensus Monte Carlo}

\begin{aug}
\author{\fnms{Maxim} \snm{Rabinovich}\ead[label=e1]{rabinovich@eecs.berkeley.edu}},
\author{\fnms{Elaine} \snm{Angelino}\ead[label=e2]{elaine@eecs.berkeley.edu}},
\and
\author{\fnms{Michael I.} \snm{Jordan}\ead[label=e3]{jordan@eecs.berkeley.edu}}

\affiliation{UC Berkeley}
\runauthor{Rabinovich, Angelino, and Jordan}

\address{Computer Science Division\\
UC Berkeley\\
Berkeley, CA\\
\printead{e1}\\
\phantom{E-mail:\ }\printead*{e2}\\
\phantom{E-mail:\ }\printead*{e3}}
\end{aug}

\begin{abstract}
Practitioners of Bayesian statistics have long depended on Markov chain Monte Carlo (MCMC) to obtain samples from intractable posterior distributions.
Unfortunately, MCMC algorithms are typically serial, and do not scale to the large datasets typical of modern machine learning.
The recently proposed consensus Monte Carlo algorithm removes this limitation by partitioning the data and drawing samples conditional on each partition in parallel~\citep{scott-2013-consensus}.
A fixed aggregation function then combines these samples, yielding approximate posterior samples.
We introduce {\it variational consensus Monte Carlo} (VCMC), a variational Bayes algorithm that optimizes over aggregation functions to obtain samples from a distribution that better approximates the target.
The resulting objective contains an intractable entropy term; we therefore derive a relaxation of the objective and show that the relaxed problem is blockwise concave under mild conditions.
We illustrate the advantages of our algorithm on three inference tasks from the literature, demonstrating both the superior quality of the posterior approximation and the moderate overhead of the optimization step. Our algorithm achieves a relative error reduction (measured against serial MCMC) of up to 39\% compared to consensus Monte Carlo on the task of estimating 300-dimensional probit regression parameter expectations; similarly, it achieves an error reduction of 92\% on the task of estimating cluster comembership probabilities in a Gaussian mixture model with 8 components in 8 dimensions. Furthermore, these gains come at moderate cost compared to the runtime of serial MCMC---achieving near-ideal speedup in some instances.
\end{abstract}

\end{frontmatter}

\section{Introduction}

Modern statistical inference demands scalability to massive datasets and high-dimensional models.
%
%
Innovation in distributed and stochastic optimization has enabled parameter estimation in this setting, e.g. via stochastic~\citep{bertsekas:1999-nonlinear} and asynchronous~\citep{niu:2011-hogwild} variants of gradient descent.
%
%
Achieving similar success in Bayesian inference -- where the target is a posterior distribution over parameter values, rather than a point estimate -- remains computationally challenging.

Two dominant approaches to Bayesian computation are variational Bayes and Markov chain Monte Carlo (MCMC).
Within the former, scalable algorithms like stochastic variational inference~\citep{hoffman:2013-svi} and streaming variational Bayes~\citep{broderick:2013-svb} have successfully imported ideas from optimization.
Within MCMC, adaptive subsampling procedures~\citep{bardenet:2014-subsampling,korattikara-2014-austerity}, stochastic gradient Langevin dynamics~\citep{welling-2011-langevin}, and Firefly Monte Carlo~\citep{maclaurin:2014-firefly} have applied similar ideas, achieving computational gains by operating only on data subsets. These algorithms are serial, however, and thus cannot take advantage of multicore and multi-machine architectures.
This motivates data-parallel MCMC algorithms such as asynchronous variants of Gibbs sampling~\citep{asuncion:2008-async-lda,doshi-velez:2009-p-ibp,johnson:2013-analyzing}.

Our work belongs to a class of {\it communication-avoiding} data-parallel MCMC algorithms.
These algorithms partition the full dataset~$X_{1:N}$ into~$K$ disjoint subsets~$X_{I_{1:K}}$ where~$X_{I_k}$ denotes the data associated with core~$k$. Each core samples from a {\it subposterior} distribution,
\begin{equation}
p_{k}\left(\theta_{k}\right) \propto p\left(X_{I_k}~|~\theta_{k}\right)p\left(\theta_{k}\right)^{1/K},
\label{eq:subposterior}
\end{equation}
and then a centralized procedure combines the samples into an approximation of the full posterior. Due to their efficiency,
such procedures have recently received substantial attention~\citep{xing-2014-embarrassing,scott-2013-consensus,dunson-2013-weierstrass}.

%
One of these algorithms, {\it consensus Monte Carlo} (CMC), requires communication only at the start and end of sampling~\citep{scott-2013-consensus}. CMC proceeds from the intuition that subposterior samples, when aggregated correctly, can approximate full posterior samples. This is formally backed by the factorization
\begin{equation}
p\left(\theta~|~x_{1:N}\right) \propto p\left(\theta\right)\prod_{k = 1}^{K} p\left(X_{I_k}~|~\theta\right) = \prod_{k = 1}^{K} p_{k}\left(\theta\right) . 
\label{eq:factorization}
\end{equation}
If one can approximate the subposterior densities $p_{k}$, using kernel density estimates for instance~\citep{xing-2014-embarrassing}, it is therefore possible to recombine them into an estimate of the full posterior. 

Unfortunately, the factorization does not make it immediately clear how to aggregate on the level of \textit{samples} without first having to obtain an estimate of the densities $p_{k}$ themselves. CMC alters~\eqref{eq:factorization} to untie the parameters across partitions and plug in a deterministic link $F$ from the~$\theta_k$ to~$\theta$:
\begin{equation}
p\left(\theta~|~x_{1:N}\right) \approx \prod_{k = 1}^{K} p_{k}\left(\theta_{k}\right) \cdot \delta_{\theta = F\left(\theta_{1}, \dots, \theta_{K}\right)} . 
\label{eq:approx-factorization}
\end{equation}
This approximation and an aggregation function motivated by a Gaussian approximation lie at the core of the CMC algorithm~\citep{scott-2013-consensus}.

The introduction of CMC raises numerous interesting questions whose answers are essential to its wider application. Two among these stand
out as particularly vital. First, how should the aggregation function be chosen to achieve the closest possible approximation to the target posterior?
Second, when model parameters exhibit structure or must conform to constraints --- if they are, for example, positive semidefinite covariance matrices
or labeled centers of clusters --- how can the weighted averaging strategy of~\citet{scott-2013-consensus} be modified to account for this structure?


In this paper, we propose {\it variational consensus Monte Carlo} (VCMC), a novel class
of data-parallel MCMC algorithms that allow both questions to be addressed. By formulating the choice
of aggregation function as a variational Bayes problem, VCMC makes it possible to adaptively choose the
aggregation function to achieve a closer approximation to the true posterior. The flexibility of VCMC
likewise supports nonlinear aggregation functions, 
\begin{arxiv}
and we exploit this versatility to introduce structured aggregation functions applicable to inference problems that are not purely vectorial.
\end{arxiv}
\begin{paper}
including structured aggregation functions applicable to not purely vectorial inference problems.
\end{paper}

An appealing benefit of the VCMC point of view is a clarification of the untying step leading to~\eqref{eq:approx-factorization}.
In VCMC, the approximate factorization corresponds to a variational approximation to the true posterior.
This approximation can be viewed as the joint distribution of
$\left(\theta_{1}, \dots, \theta_{K}\right)$ and~$\theta$ in an augmented model that assumes
conditional independence between the data partitions and posits a
deterministic mapping from partition-level parameters to the single global parameter. The added flexibility of this point-of-view
makes it possible to move beyond subposteriors and include alternative forms of \eqref{eq:approx-factorization} within
the CMC framework. In particular, it is possible to define ${p_{k}\left(\theta_{k}\right) = p\left(\theta_{k}\right)p\left(X_{I_k}~|~\theta_k\right)}$, using
partial posteriors in place of subposteriors (cf.~\citep{strathmann:2015-unbiased}). Although extensive investigation of this issue is beyond the scope of this paper, we provide some evidence in Section~\ref{sec:evaluation} that
partial posteriors are a better choice in some circumstances and demonstrate that VCMC can provide substantial gains in both the partial posterior and subposterior settings.


%
Before proceeding, we outline the remainder of this paper.
Below, in~\S\ref{sec:related}, we review CMC and related data-parallel MCMC algorithms.
Next, we cast CMC as a variational Bayes problem in~\S\ref{sec:vcmc}.
We define the variational optimization objective in~\S\ref{sec:variational},
addressing the challenging
entropy term by relaxing it to a concave lower bound, and give conditions for which this leads to a blockwise concave maximization problem.
\begin{arxiv}
In~\S\ref{sec:aggregation}, we define several classes of aggregation functions,
and design novel classes that enable {\it structured} aggregation of constrained and structured samples---e.g.~positive semidefinite matrices and mixture model parameters.
\end{arxiv}
\begin{paper}
In~\S\ref{sec:aggregation}, we define several aggregation functions, including novel ones that enable aggregation of structured samples---e.g.~positive semidefinite matrices and mixture model parameters.
\end{paper}
In~\S\ref{sec:evaluation}, we evaluate the performance of VCMC and CMC
relative to serial MCMC. We replicate experiments carried out by~\citet{scott-2013-consensus} 
and execute more challenging experiments in higher dimensions and with more data.
Finally in~\S\ref{sec:conclusion}, we summarize our approach and discuss several open problems generated by this work.

\section{Related work}
\label{sec:related}

We focus on data-parallel MCMC algorithms for large-scale Bayesian posterior sampling. 
Several recent research threads propose schemes
in the setting where the posterior factors as in~\eqref{eq:factorization}.
In general, these parallel strategies are approximate relative to serial procedures,
and the specific algorithms differ in terms of the approximations employed and amount of communication required.

At one end of the communication spectrum are algorithms
that fit into the MapReduce model~\citep{dean:2008-mapreduce}.
First, $K$ parallel cores sample from $K$ subposteriors, defined in~\eqref{eq:subposterior}, via any Monte Carlo sampling procedure.
The subposterior samples are then aggregated
to obtain approximate samples from the full posterior.
This leads to the
challenge of designing proper and efficient aggregation procedures.

%

\citet{scott-2013-consensus} propose {\it consensus Monte Carlo} (CMC), which
constructs approximate posterior samples via weighted averages of subposterior samples;
our algorithms are motivated by this work.
Let $\theta_{k,t}$ denote the $t$-th subposterior sample from core~$k$.
In CMC, the aggregation function averages across each set of~$K$
samples~$\{\theta_{k,t}\}_{k=1}^K$ to produce one approximate posterior sample~$\hat\theta_t$.
Uniform averaging is a natural but na\"ive heuristic
that can in fact be improved upon via a weighted average,
\begin{align}
\hat\theta = F\left(\theta_{1:K}\right) & = \sum_{k = 1}^{K} W_k \theta_{k},
\label{eq:weighted-average}
\end{align}
where in general,~$\theta_k$ is a vector and~$W_k$ can be a matrix.
The authors derive weights motivated by the special case of a Gaussian posterior,
where each subposterior is consequently also Gaussian.
Let~$\Sigma_k$ be the covariance of the $k$-th subposterior.
This suggests weights~${W_k = \Sigma_k^{-1}}$ equal to the subposteriors' inverse covariances.
CMC treats arbitrary subpostertiors as Gaussians,
aggregating with weights given by empirical estimates of~$\hat\Sigma_k^{-1}$ computed from the observed subposterior samples.

\citet{xing-2014-embarrassing} propose aggregation at the level of distributions rather than samples. 
Here, the idea is to form an approximate posterior via a product of density estimates
fit to each subposterior, and then sample from this approximate posterior.
The accuracy and computational requirements of this approach depend on the complexity of these density estimates. \citet{dunson-2013-weierstrass} develop alternate data-parallel MCMC methods
based on applying a Weierstrass transform
to each subposterior.
These {\it Weierstrass sampling} procedures introduce auxiliary variables
and additional communication between computational cores.

\section{Consensus Monte Carlo as variational inference}
\label{sec:vcmc}

Given the distributional form of the CMC framework~\eqref{eq:approx-factorization}, we would like to choose~$F$ so that the induced distribution on~$\theta$ is as close as possible to the true posterior. This is precisely the problem addressed by variational Bayes, which approximates
an intractable posterior $p\left(\theta~|~\Xmat\right)$ by the solution $q^{\ast}$ to the constrained optimization problem
\begin{equation*}
\min \KL\left(q~||~p\left(\cdot~|~\Xmat\right)\right) ~~ \text{subject to}~~ q \in \dists ,
\end{equation*} 
where $\dists$ is the family of variational approximations to the distribution, usually chosen to make both optimization and evaluation of target expectations tractable.
We thus view the aggregation problem in CMC as a variational inference
problem, with the variational family given by all distributions
$\dists = {\dists_{\funcs} = \lbrace q_{F} \colon F \in \funcs \rbrace}$,
where each $F$ is in some function class $\funcs$ and defines a density
\begin{equation*}
q_{F}\left(\theta\right) = \int_{\Omega^{K}} \prod_{k =1}^{K} p_{k}\left(\theta_{k}\right) \cdot \delta_{\theta = F\left(\theta_{1}, \dots, \theta_{K}\right)} ~ \der\theta_{1:K}.
\end{equation*}
\begin{arxiv}
In practice, we parameterize $\funcs$ by a finite dimensional set of parameters and optimize over it using projected stochastic gradient descent (SGD).
\end{arxiv}
\begin{paper}
In practice, we optimize over finite-dimensional $\funcs$ using projected stochastic gradient descent (SGD).
\end{paper}

\section{The variational optimization problem}
\label{sec:variational}

Standard optimization of the variational Bayes objective uses the evidence lower bound (ELBO)
\begin{align}
\log p\left(\Xmat\right) = \log \E_{q}\left[\frac{p\left(\theta,~ \Xmat\right)}{q\left(\theta\right)}\right] 
                                     & \geq \E_{q}\left[\log \frac{p\left(\theta,~ \Xmat\right)}{q\left(\theta\right)}\right] \nonumber \\
                                     & = \log p\left(\Xmat\right) - \KL\left(q~||~p\left(\cdot~|~\Xmat\right)\right)
				      = \colon \varobj_{\mathrm{VB}}\left(q\right) \label{eq:VB-obj}.
\end{align}
We can therefore recast the variational optimization problem in an equivalent form as
\begin{equation*}
\max \varobj_{\mathrm{VB}}\left(q\right) ~~ \text{subject to}~~ q \in \dists .
\end{equation*}

Unfortunately, the variational Bayes objective $\varobj_{\mathrm{VB}}$ remains difficult to optimize. Indeed, by writing
$$ \varobj_{\mathrm{VB}}\left(q\right) = \E_{q}\left[\log p\left(\theta,~ \Xmat\right)\right] + \ent\left[q\right] $$
we see that optimizing $\varobj_{\mathrm{VB}}$ requires computing an entropy $\ent\left[q\right]$ and its gradients.
We can deal with this issue by deriving a lower bound on the entropy that relaxes the objective further.

Concretely, suppose that every $F \in \funcs$ can be decomposed as
${F\left(\theta_{1:K}\right) = \sum_{k = 1}^{K} F_{k}\left(\theta_{k}\right)}$,
with each $F_{k}$ a differentiable bijection. Since the $\theta_{k}$ come from subposteriors conditioning on different segments of the data, they are independent. The entropy power inequality~\citep{cover:2006-info-theory}
therefore implies
\begin{align}
\ent\left[q\right] & \geq \max_{1 \leq k \leq K} \ent\left[F_{k}\left(\theta_{k}\right)\right] = \max_{1 \leq k \leq K} \left(\ent\left[p_{k}\right] + \E_{p_{k}}\left[\log\det\left[J\left(F_{k}\right)\left(\theta_{k}\right)\right]\right]\right) \nonumber \\
                    ~     & \geq \min_{1 \leq k \leq K} \ent\left[p_{k}\right] + \max_{1 \leq k \leq K} \E_{p_{k}}\left[\log\det\left[J\left(F_{k}\right)\left(\theta_{k}\right)\right]\right] \label{eq:ent-relax-tighter} \\
                    ~     & \geq \min_{1 \leq k \leq K} \ent\left[p_{k}\right] + \frac{1}{K}\sum_{k = 1}^{K} \E_{p_{k}}\left[\log\det\left[J\left(F_{k}\right)\left(\theta_{k}\right)\right]\right] = \colon \tilde{\ent}\left[q\right] \label{eq:ent-relax},
\end{align}
\begin{arxiv}
where $J\left(f\right)\left(\theta\right)$ denotes the Jacobian of the function $f$ evaluated at $\theta$. The proof of this inequality can be found in the supplement.
\end{arxiv}
\begin{paper}
where $J\left(f\right)$ denotes the Jacobian of the function $f$. The proof can be found in the supplement.
\end{paper}

This approach gives an explicit, easily computed approximation to the entropy---and this approximation is a 
lower bound, allowing us to interpret it simply as a further relaxation of the original inference problem. Furthermore, and crucially, it decouples $p_{k}$ and $F_{k}$, thereby making it possible to optimize over $F_{k}$ without estimating the entropy of any $p_{k}$. We note additionally that if we are willing
to sacrifice concavity, we can use the tighter lower bound on the entropy given by~\eqref{eq:ent-relax-tighter}.

Putting everything together, we can define our relaxed variational objective as
\begin{equation}\label{eq:relax-obj}
\varobj\left(q\right) = \E_{q}\left[\log p\left(\theta,~ \Xmat\right)\right]  + \tilde{\ent}\left[q\right] .
\end{equation}
Maximizing this function is the variational Bayes problem we consider in the remainder of the paper.


\paragraph{Conditions for concavity}
\label{sec:concavity}

Under certain conditions, the problem posed above is blockwise concave. To see when this holds, 
we use the language of graphical models and exponential families. To derive the result in the greatest possible generality,
we decompose the variational objective~as
$$ \varobj_{\mathrm{VB}} = \E_{q}\left[\log p\left(\theta,~ \Xmat\right)\right]  + \ent\left[q\right] \geq \tilde{\varobj} + \tilde{\ent}\left[q\right]  $$
and prove concavity directly for $\tilde{\varobj}$, then treat our choice of relaxed entropy \eqref{eq:ent-relax}.
We emphasize that while the entropy relaxation is only defined for decomposed aggregation functions, concavity of the partial objective holds for arbitrary aggregation functions. All proofs are in the supplement.

Suppose the model distribution is specified via a graphical model $G$, so that $\theta = \left(\theta^{u}\right)_{u \in V(G)}$, such that each conditional distribution is defined by an exponential family
$$ \log{p}\left(\theta^{u} ~|~ \theta^{\parents(u)}\right) = \log{h}^{u}\left(\theta^{u}\right) + \sum_{u' \in \parents(u)} \left(\theta^{u'}\right)^{T}T^{u' \rightarrow u}\left(\theta^{u}\right) - \log{A^{u}}\left(\theta^{\parents(u)}\right) . $$
If each of these log conditional density functions is log-concave in $\theta^{u}$, we can guarantee that the log likelihood is concave in each $\theta^{u}$ individually. 

\begin{thm}[Blockwise concavity of the variational cross-entropy]\label{thm:conc-Eq}
Suppose that the model distribution is specified by a graphical model $G$ in which each conditional probability density is a log-concave exponential family. Suppose further that the variational aggregation function family satisfies
${ \mathcal{F} = \prod_{u \in V(G)} \mathcal{F}^{u} }$
such that we can decompose each aggregation function across nodes~via
$$ F\left(\theta\right) = \left(F^{u}\left(\theta^{u}\right)\right)_{u \in V(G)},~~ F \in \mathcal{F} ~~ \text{and}~~ F^{u} \in \mathcal{F}^{u} . $$
If each $\mathcal{F}^{u}$ is a convex subset of some vector space $\mathcal{H}^{u}$, then the variational cross-entropy $\tilde{\varobj}$ is
concave in each $F^{u}$ individually.
\end{thm}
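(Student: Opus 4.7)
The plan is to expand $\log p(\theta, \Xmat)$ using the graphical-model factorization and show that the dependence of each factor on $F^u$ is concave; then $\tilde{\varobj}$ is concave as an expectation of a sum of concave functions.

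First, I would write
\[
\log p(\theta, \Xmat) = \sum_{v \in V(G)} \log p\left(\theta^v \mid \theta^{\parents(v)}\right),
\]
treating the observed $\Xmat$-nodes as part of the graph. Under the decomposition hypothesis $F(\theta_{1:K}) = (F^v(\theta_{1:K}^v))_v$, sampling from $q$ amounts to drawing $\theta_{1:K}$ from $\prod_{k} p_k$ and setting $\theta^v = F^v(\theta_{1:K}^v)$, so that
\[
\tilde{\varobj}(F) = \sum_{v \in V(G)} \E\left[\log p\left(F^v(\theta_{1:K}^v) \mid (F^{u'}(\theta_{1:K}^{u'}))_{u' \in \parents(v)}\right)\right].
\]
Fixing all $F^{u'}$ with $u' \neq u$, it suffices to show that each summand, as a function of $F^u \in \mathcal{F}^u$, is concave, since sums and expectations preserve concavity and $\mathcal{F}^u$ is convex.

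I would then split into three cases. (i) For $v = u$, the conditioning variables $\theta^{\parents(u)}$ are fixed functions of $\theta_{1:K}$ not depending on $F^u$, so the integrand is $y \mapsto \log p(y \mid \theta^{\parents(u)})$ composed with the linear evaluation $F^u \mapsto F^u(\theta_{1:K}^u)$; log-concavity of the conditional in its own argument yields concavity in $F^u$. (ii) For $v \neq u$ with $u \in \parents(v)$, I would invoke the exponential-family form: the $\log h^v(\theta^v)$ term and the cross-terms $(\theta^{u'})^T T^{u' \to v}(\theta^v)$ for $u' \neq u$ are constant in $F^u$; the term $F^u(\theta_{1:K}^u)^T T^{u \to v}(\theta^v)$ is linear in $F^u$; and $-\log A^v(\theta^{\parents(v)})$ is concave in $\theta^{\parents(v)}$, since $\log A^v$ is convex in its natural parameters, so composition with the affine map determined by the fixed $F^{u'}$'s preserves concavity. (iii) For $v \neq u$ with $u \notin \parents(v)$, the summand does not depend on $F^u$. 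Summing and taking expectation concludes concavity in $F^u$.

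The main subtlety I expect is the composition step: one must check that $F^u \mapsto F^u(\theta_{1:K}^u)$ is a linear map on $\mathcal{H}^u$, which it is because pointwise evaluation is a linear functional, so concavity of $y \mapsto \log p(y \mid \cdot)$ transfers along this composition to $\mathcal{F}^u$. A secondary point is identifying $\theta^{\parents(v)}$ as the natural parameter so that convexity of the log-partition $\log A^v$ applies; the stated exponential-family form places the linear pairing with the sufficient statistic on the $\theta^{u'}$ side, making this identification transparent. Neither step is a serious obstacle.
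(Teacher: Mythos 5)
Your proposal is correct and follows essentially the same route as the paper's proof: reduce to concavity of $\log p(F(\theta_{1:K}), \Xmat)$ in $\theta^u$ for fixed $\theta_{1:K}$ via linearity of the evaluation map $F^u \mapsto F^u(\theta^u)$, then handle the own-conditional term by the log-concavity hypothesis, the children's cross-terms by linearity, and the $-\log A^v$ terms by convexity of the log-partition function. Your three-case split is just a more explicit bookkeeping of the paper's grouping of $\ell(\theta)$ into the $u$-conditional, the child-conditionals, and a constant $c_u$.
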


Assuming that the aggregation function can be decomposed into a sum over functions of individual subposterior terms we can also prove concavity of our entropy relaxation \eqref{eq:ent-relax}.

\begin{thm}[Concavity of the relaxed entropy]\label{thm:conc-ent}
Suppose $\funcs = \prod_{k = 1}^{K} \funcs_{k}$, with each function $F \in \funcs$ decomposing as
${ F\left(\theta_{1}, \dots, \theta_{K}\right) = \sum_{k = 1}^{K} F_{k}\left(\theta_{k}\right) }$
for unique bijective $F_{k} \in \funcs_{k}$. Then the relaxed entropy \eqref{eq:ent-relax} is concave in $F$. 
\end{thm}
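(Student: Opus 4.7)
The plan is to reduce the claim to the classical concavity of $\log\det$ on the positive-definite cone, exploiting the fact that the Jacobian depends linearly on the underlying aggregation function. First I would observe that the first summand in \eqref{eq:ent-relax}, namely $\min_{k} \ent[p_k]$, depends only on the subposteriors and is constant in $F$; so concavity of $\tilde{\ent}$ reduces to concavity of $\frac{1}{K}\sum_{k = 1}^{K} \E_{p_k}\left[\log\det J(F_k)(\theta_k)\right]$ in $F = (F_1, \dots, F_K) \in \prod_{k} \funcs_{k}$. Because the $k$-th summand depends only on $F_k$, it will suffice to show that for each $k$ the map
$$ F_k \mapsto \E_{p_k}\left[\log\det J(F_k)(\theta_k)\right] $$
is concave on $\funcs_{k}$; joint concavity in $F$ then follows from the componentwise affine structure of the sum.

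For the second step, I would fix $\theta_k$ in the support of $p_k$ and note that the map $F_k \mapsto J(F_k)(\theta_k)$ is linear, since both differentiation and evaluation at a point are linear operations on the ambient function space $\mathcal{H}^{k}$ containing $\funcs_k$. Composing with $\log\det$, which is concave on the positive-definite cone, gives a concave function of $F_k$ for each $\theta_k$. Integrating against $p_k$ preserves concavity, so $F_k \mapsto \E_{p_k}\log\det J(F_k)(\theta_k)$ is concave; summing over $k$ and adding the $F$-independent constant yields the desired concavity of $\tilde{\ent}$.

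The main obstacle, and the point that deserves the most care, is the domain on which $\log\det$ is concave: $\log\det$ is concave on $\PSD$ but not on the full space of invertible matrices (for instance, $\log|\det|$ is not concave across sign changes of the determinant). The hypothesis that every $F_k$ is a differentiable bijection only forces $J(F_k)(\theta_k)$ to be nonsingular, not positive definite. I would therefore either (i) restrict attention to aggregation classes $\funcs_k$ whose Jacobians lie in $\PSD$ at every $\theta_k$ in the support of $p_k$ — a condition satisfied by the concrete families considered later in \S\ref{sec:aggregation}, such as linear maps with PSD weight matrices — or (ii) appeal to the invariance of the relaxed objective under multiplying each $F_k$ by an orthogonal transformation (since this leaves $|\det J(F_k)|$ unchanged), allowing one to assume without loss of generality that the Jacobian is in $\PSD$ at a reference point. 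In either case, the linearity argument above then closes the proof cleanly; the same linearity plus concavity-of-$\log\det$ template underlies Theorem~\ref{thm:conc-Eq}, so the two proofs share their essential structure.
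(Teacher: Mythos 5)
Your proposal is essentially identical to the paper's proof: both reduce to concavity of $F_k \mapsto \log\det J(F_k)(\theta_k)$ for fixed $\theta_k$, obtained by composing the linear map $F_k \mapsto J(F_k)(\theta_k)$ with the concave function $\log\det$, and then pass the concavity through the expectation and the sum over $k$. Your added caveat about $\log\det$ being concave only on the positive-definite cone (not on all nonsingular matrices) is a legitimate point that the paper's one-line proof silently elides, and your proposed restriction to families with PSD Jacobians is consistent with the weight constraints actually imposed in \S\ref{sec:aggregation}.
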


As a result, we derive concavity of the variational objective in a broad range of settings.

\begin{cor}[Concavity of the variational objective]\label{cor:conc-var}
Under the hypotheses of Theorems \ref{thm:conc-Eq} and \ref{thm:conc-ent}, the variational Bayes objective 
${\varobj = \tilde{\varobj} + \tilde{\ent}}$
is concave in each $F^{u}$ individually.
\end{cor}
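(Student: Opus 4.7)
The plan is to derive the corollary as a direct consequence of the two theorems combined with the elementary fact that a sum of concave functions is concave. First I would verify that the hypotheses of both theorems can be imposed simultaneously on the same variational family. Theorem~\ref{thm:conc-Eq} requires $\mathcal{F}$ to factor across graph nodes as $\mathcal{F} = \prod_{u \in V(G)} \mathcal{F}^{u}$, while Theorem~\ref{thm:conc-ent} requires each $F \in \mathcal{F}$ to decompose across partitions as $F\left(\theta_{1:K}\right) = \sum_{k = 1}^{K} F_{k}\left(\theta_{k}\right)$ for bijective $F_{k}$. These two requirements are compatible: they jointly amount to assuming that each aggregation function can be written in the doubly-indexed form $F^{u}\left(\theta^{u}_{1:K}\right) = \sum_{k = 1}^{K} F^{u}_{k}\left(\theta^{u}_{k}\right)$, and under this assumption both theorems apply to the same family.

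Next I would apply the two theorems. By Theorem~\ref{thm:conc-Eq}, $\tilde{\varobj}$ is concave in $F^{u}$ when every other $F^{u'}$ is held fixed. By Theorem~\ref{thm:conc-ent}, $\tilde{\ent}$ is concave in $F$ viewed as an element of the product space $\prod_{u} \mathcal{F}^{u}$; since joint concavity restricted to any coordinate block is still concavity (the restriction of a concave function to any affine slice of its domain is concave), $\tilde{\ent}$ is likewise concave in each $F^{u}$ individually with the remaining $F^{u'}$ fixed.

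Finally, I would conclude by adding these two statements: because the sum of two functions that are each concave in $F^{u}$ is again concave in $F^{u}$, the relaxed variational objective $\varobj = \tilde{\varobj} + \tilde{\ent}$ is concave in each $F^{u}$ individually, which is the desired conclusion. There is no genuine obstacle here; the only point worth being careful about is the bookkeeping that reconciles the two different decompositions (over graph nodes $u$ in Theorem~\ref{thm:conc-Eq} and over data partitions $k$ in Theorem~\ref{thm:conc-ent}), which the double-index form above resolves.
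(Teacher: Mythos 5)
Your proposal is correct and follows the same (essentially unique) route the paper takes: the corollary is immediate from Theorems~\ref{thm:conc-Eq} and~\ref{thm:conc-ent} because the sum of two functions each concave in $F^{u}$ is concave in $F^{u}$; the paper treats this as immediate and gives no separate proof. Your extra care in reconciling the node-wise and partition-wise decompositions via the doubly-indexed form is a reasonable bookkeeping point but not a substantive departure.
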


\section{Variational aggregation function families}
\label{sec:aggregation}

The performance of our algorithm depends critically on the choice of aggregation function family~$\funcs$.
The family must be sufficiently simple to support efficient optimization, expressive to capture the complex transformation from the set of subposteriors to the full posterior, and structured to preserve structure in the parameters.
We now illustrate some aggregation functions that meet these criteria.

\begin{arxiv}
\subsection{Vector aggregation}
\end{arxiv}
\begin{paper}
\paragraph{Vector aggregation}
\end{paper}

In the simplest case, $\theta \in \R^{d}$ is an unconstrained vector. Then, a linear aggregation function
${F_{W} = \sum_{k = 1}^{K} W_{k}\theta_{k}}$
makes sense, and it is natural to impose constraints to make this sum behave like a weighted average---i.e.,
each ${W_{k} \in \PSD^{d}}$ is a positive semidefinite (PSD) matrix and ${\sum_{k = 1}^{K} W_{k} = I_{d}}$.
For computational reasons, it is often desirable to restrict to diagonal~$W_{k}$.

\begin{arxiv}
\subsection{Spectral aggregation}
\end{arxiv}
\begin{paper}
\paragraph{Spectral aggregation}
\end{paper}

Cases involving structure exhibit more interesting behavior. Indeed, if our parameter is a PSD matrix $\Lambda \in \PSD^{d}$, applying
the vector aggregation function above to the flattened vector form $\mvec\left(\Lambda\right)$ of the parameter does not suffice. Denoting elementwise matrix product as
$\circ$, we note that this strategy would in general lead to
${ F_{W}\left(\Lambda_{1:m}\right) = \sum_{k = 1}^{K} W_{k} \circ \Lambda_{k} \notin \PSD^{d} }$.

We therefore introduce a more sophisticated aggregation function that preserves PSD structure. For this, given symmetric $A \in \R^{d \times d}$, define $R\left(A\right)$ and $D\left(A\right)$ to be
orthogonal and diagonal matrices, respectively, such that
${ A = R\left(A\right)^{T}D\left(A\right)R\left(A\right) }$.
Impose further---and crucially---the canonical ordering
${ D\left(A\right)_{11} \geq \cdots \geq D\left(A\right)_{dd} }$.
We can then define our spectral aggregation function by
$$ \Fspec_{W}\left(\Lambda_{1:K}\right) = \sum_{k = 1}^{K} R\left(\Lambda_{k}\right)^{T}\left[W_{k}D\left(\Lambda_{k}\right)\right]R\left(\Lambda_{k}\right) . $$
Assuming ${W_{k} \in \PSD^{d}}$, the output of this function is guaranteed to be PSD, as required. As above we restrict the set of~$W_{k}$ to the matrix simplex $\lbrace \left(W_{k}\right)_{k = 1}^{K} \colon W_{k} \in \PSD^{d},~\sum_{k = 1}^{K} W_{k} = I\rbrace$.

\begin{arxiv}
\subsection{Combinatorial aggregation}
\end{arxiv}
\begin{paper}
\paragraph{Combinatorial aggregation}
\end{paper}

Additional complexity arises with unidentifiable latent variables and, more generally, models with multimodal posteriors.
Since this class encompasses many popular algorithms in machine learning, including factor analysis, mixtures of Gaussians and multinomials, and latent Dirichlet allocation (LDA), we now show how our framework can accommodate them.

For concreteness, suppose now that our model parameters are given by $\theta \in \R^{L \times d}$, where~$L$ denotes the number of global latent variables (e.g. cluster centers).
We introduce discrete alignment parameters~$a_{k}$ that indicate how latent variables associated with partitions map to global latent variables. Each~$a_{k}$
is thus a one-to-one correspondence~${[L] \rightarrow [L]}$, with~$a_{k\ell}$ denoting the index on worker core~$k$ of cluster center~$\ell$.
For fixed~$a$, we then obtain the variational aggregation function
\begin{equation*}
F_{a}\left(\theta_{1:K}\right) = \biggl(\sum_{k = 1}^{K} W_{k\ell}\theta_{ka_{k\ell}\left(\ell\right)}\biggr)_{\ell = 1}^{L} .
\end{equation*}
Optimization can then proceed in an alternating manner, switching between the alignments~$a_{k}$ and the weights~$W_{k}$, or in a greedy manner, fixing
the alignments at the start and optimizing the weight matrices. 
In practice, we do the latter, aligning using a simple heuristic objective
$\mathcal{O}\left(a\right) = \sum_{k = 2}^{K}\sum_{\ell = 1}^{L} \left\|\bar{\theta}_{ka_{k\ell}} - \bar{\theta}_{1\ell}\right\|_{2}^{2}$ ,
where $\bar{\theta}_{k\ell}$ denotes the mean value of cluster center $\ell$ on partition $k$. As~$\mathcal{O}$ suggests, we set~${a_{1\ell} = \ell}$. This is permitted because the global order is only determined up to a permutation.
\begin{arxiv}
We find that minimizing~$\mathcal{O}$ via the Hungarian algorithm~\citep{kuhn:1955-match} leads to good~alignments.
\end{arxiv}
\begin{paper}
Minimizing~$\mathcal{O}$ via the Hungarian algorithm~\citep{kuhn:1955-match} leads to good alignments.
\end{paper}

\section{Empirical evaluation}
\label{sec:evaluation}

We now evaluate VCMC on three inference problems, in a range of data and dimensionality conditions. In the vector parameter case, we compare 
directly to the simple weighting baselines corresponding to previous work on CMC~\citep{scott-2013-consensus}; in the other cases, we compare to
structured analogues of these weighting schemes. Our experiments demonstrate the advantages of VCMC across the whole range of 
model dimensionality, data quantity, and availability of parallel resources.

\paragraph{Baseline weight settings} 
\citet{scott-2013-consensus} studied linear aggregation functions with fixed weights,
\begin{equation}
\Wu_{k} = \frac{1}{K} \cdot I_{d}
\qquad \text{and} \qquad
\Wg_{k} \propto \diag\left(\hat{\Sigma}_{k}\right)^{-1},
\label{eq:base-invv}
\end{equation}
corresponding to uniform averaging and Gaussian averaging, respectively, where $\hat{\Sigma}_{k}$ denotes the standard empirical estimate of the covariance.  These are our baselines for comparison.

\paragraph{Evaluation metrics}
Since the goal of MCMC is usually to estimate event probabilities and function expectations, we evaluate algorithm accuracy for such estimates, relative to serial MCMC output. For each model, we consider a suite of test functions ${f \in \mathcal{F}}$ (e.g. low degree polynomials, cluster comembership indicators),
and we assess the error of each algorithm~$\mathcal{A}$ using the metric
\begin{equation*}
\epsilon_{\mathcal{A}}\left(f\right) = \frac{\left|\E_{\mathcal{A}}\left[f\right] - \E_{\mathrm{MCMC}}\left[f\right]\right|}{\left|\E_{\mathrm{MCMC}}\left[f\right]\right|} . 
\end{equation*}
In the body of the paper, we report median values of $\epsilon_{\mathcal{A}}$, computed within each test function class. The supplement expands on this further, showing quartiles for the differences in $\epsilon_{\mathrm{VCMC}}$ and $\epsilon_{\mathrm{CMC}}$. 

\subsection{Bayesian probit regression}

\begin{paper}
\begin{figure}[t!]
\centering
\includegraphics[trim = 10mm 80mm 20mm 10mm, width=0.49\textwidth]{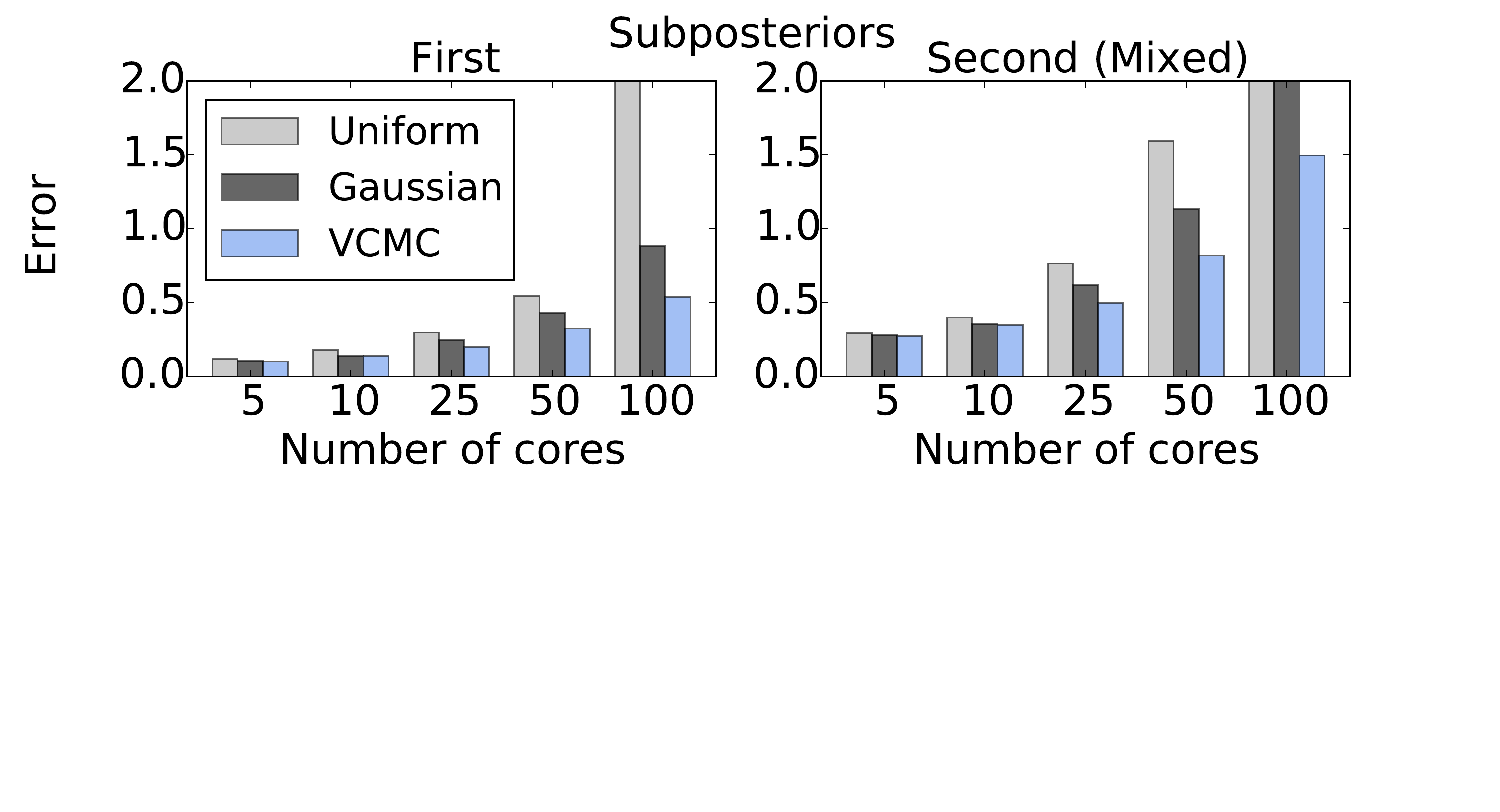}
\includegraphics[trim = 10mm 80mm 20mm 10mm, width=0.49\textwidth]{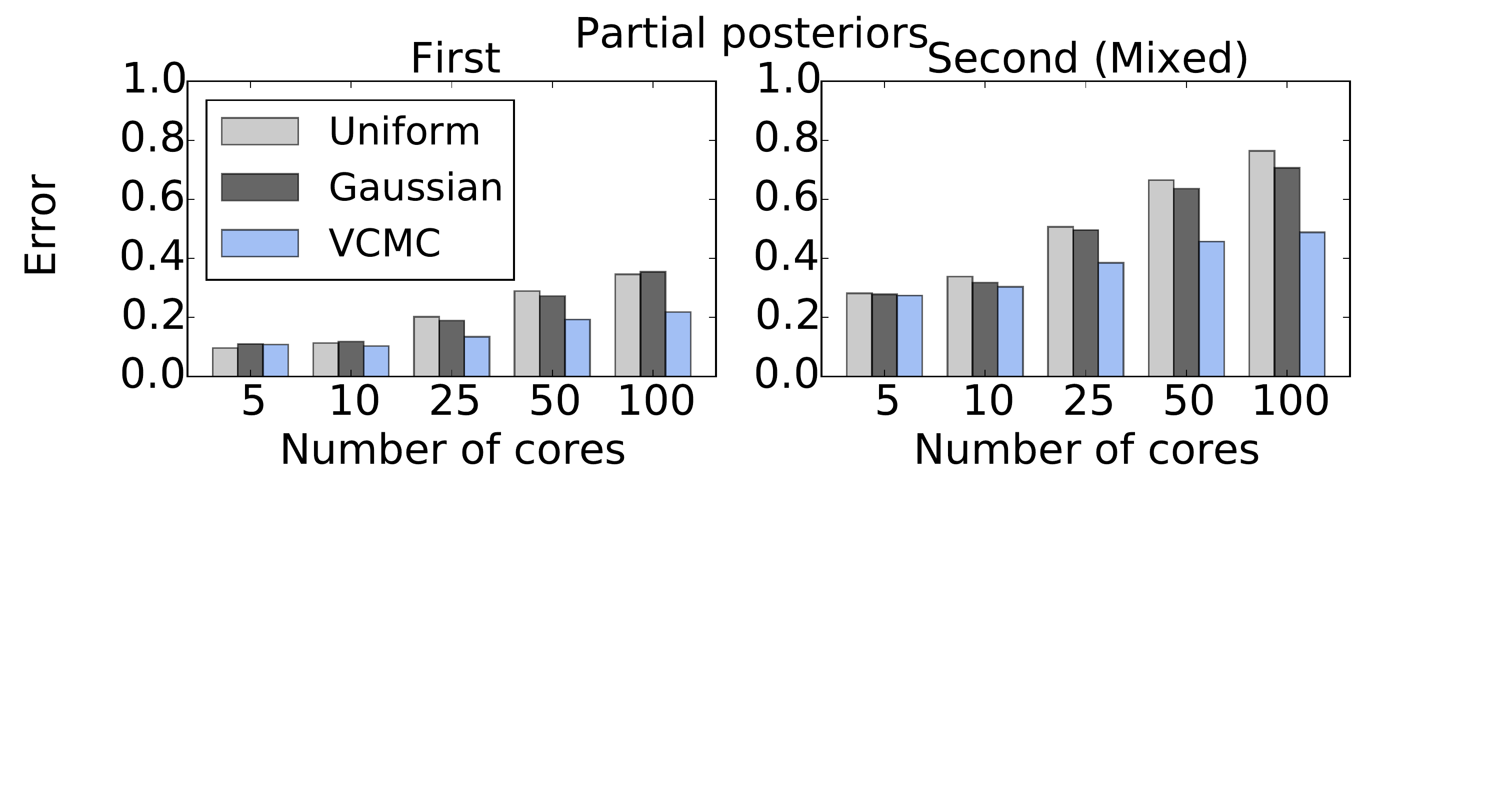}
\caption{High-dimensional probit regression $(d = 300)$. Moment approximation error for the uniform and Gaussian averaging baselines and VCMC, relative to serial MCMC, for subposteriors~\emph{(left)} and partial posteriors~\emph{(right)}; note the different vertical axis scales. We assessed three groups of functions: first moments, with $f(\beta) = \beta_{j}$ for $1 \leq j \leq d$; pure second moments, with $f(\beta) = \beta_{j}^{2}$ for $1 \leq j \leq d$; and mixed second moments, with $f(\beta) = \beta_{i}\beta_{j}$ for $1 \leq i < j \leq d$. For brevity, results for pure second moments are relegated to Figure~\ref{fig:probit-2pure} in the supplement.\label{fig:probit-error}}
\end{figure}
\end{paper}

\begin{arxiv}
\begin{figure}[t!]
\centering
\includegraphics[trim = 10mm 10mm 20mm 10mm, width=0.49\textwidth]{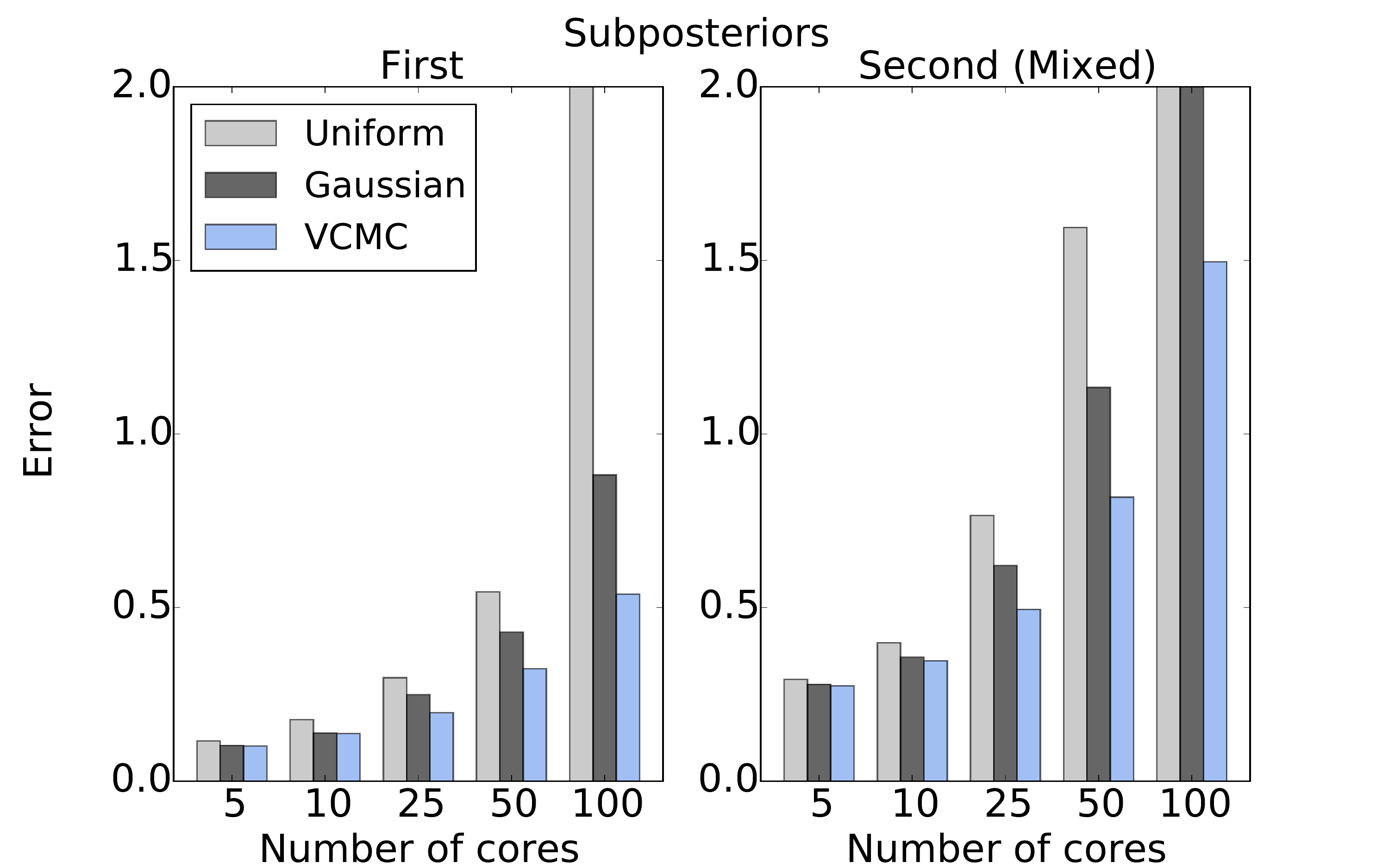}
\includegraphics[trim = 10mm 10mm 20mm 10mm, width=0.49\textwidth]{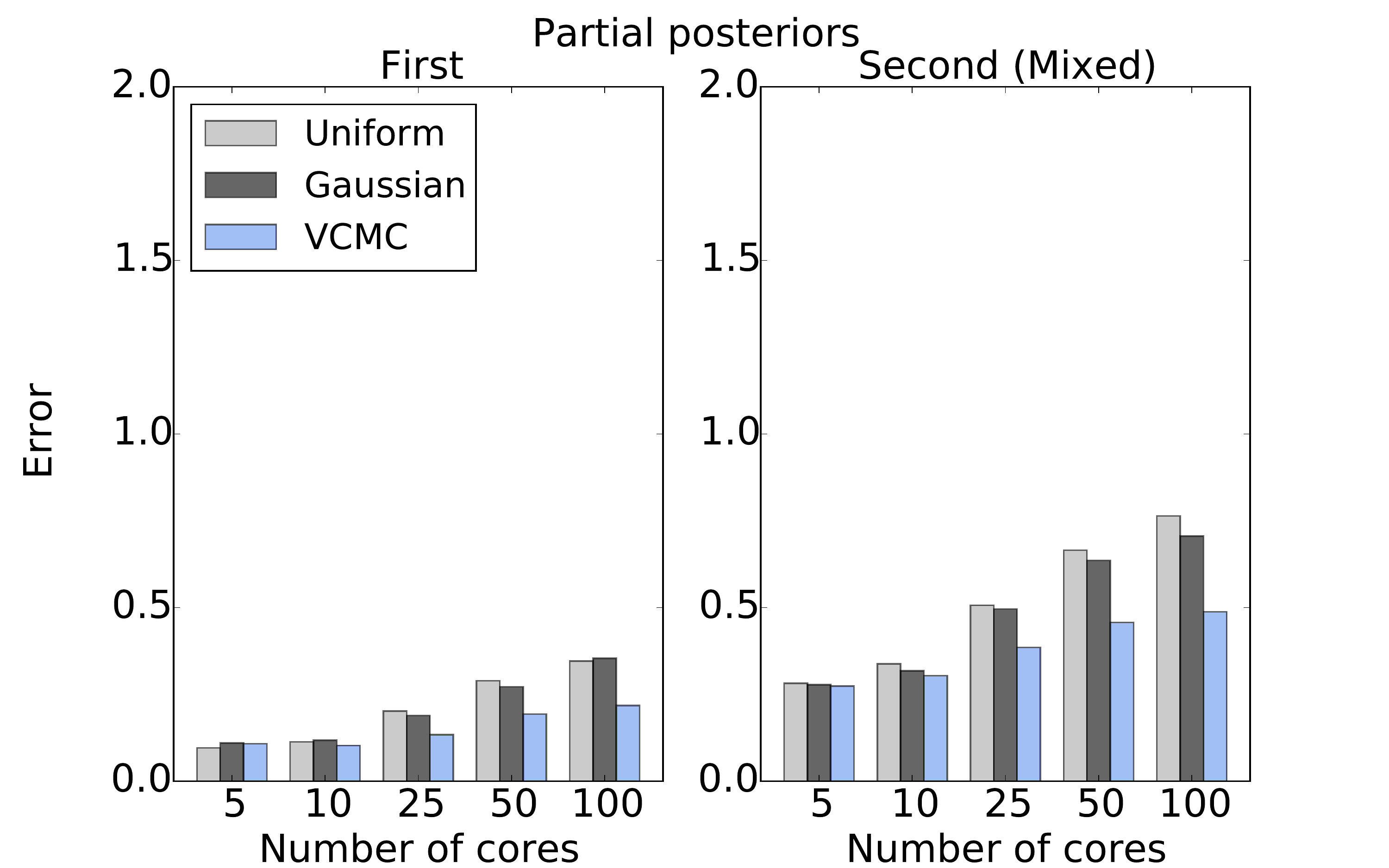}
\caption{High-dimensional probit regression $(d = 300)$. Moment approximation error for the uniform and Gaussian averaging baselines and VCMC, relative to serial MCMC, for \emph{(left)}~subposteriors and \emph{(right)}~partial posteriors. We assessed three groups of functions: first moments, with $f(\beta) = \beta_{j}$ for $1 \leq j \leq d$; pure second moments, with $f(\beta) = \beta_{j}^{2}$ for $1 \leq j \leq d$; and mixed second moments, with $f(\beta) = \beta_{i}\beta_{j}$ for $1 \leq i < j \leq d$. For brevity, results for pure second moments are relegated to Figure~\ref{fig:probit-2pure} in the supplement. Relative errors are truncated to $2$ in all cases.\label{fig:probit-error}}
\end{figure}
\end{arxiv}


We consider the nonconjugate probit regression model.
%
In this case, we use linear aggregation functions as our function class. For computational efficiency, we also limit ourselves to diagonal $W_{k}$. 
We use Gibbs sampling on the following augmented model:
\begin{paper}
\begin{align*}
\beta \sim \Norm(0,~\sigma^{2}I_{d}), \qquad
Z_{n} ~|~ \beta,~x_n \sim \Norm(\beta^{T}x_{n},~1), \qquad
Y_{n} ~|~ Z_{n},~\beta,~x_{n} = \begin{cases}
1 ~ \text{if}~ Z_{n} > 0, \\
0 ~ \text{otherwise.} \end{cases}
\end{align*}
\end{paper}
\begin{arxiv}
\begin{align*}
\beta &\sim \Norm(0,~\sigma^{2}I_{d}) \\
Z_{n} ~|~ \beta,~x_n &\sim \Norm(\beta^{T}x_{n},~1) \\
Y_{n} ~|~ Z_{n},~\beta,~x_{n} &= \begin{cases}
1 ~ \text{if}~ Z_{n} > 0, \\
0 ~ \text{otherwise.} \end{cases}
\end{align*}
\end{arxiv}
This augmentation allows us to implement an efficient and rapidly mixing Gibbs sampler, where
\begin{paper}
$$ \beta ~|~x_{1:N} = \Xmat, \qquad z_{1:N} = z \sim \Norm\left(\Sigma \Xmat^{T}z,~\Sigma\right) , \qquad \Sigma = \left(\sigma^{-2}I_{d} + \Xmat^{T}\Xmat\right)^{-1}.$$
\end{paper}
\begin{arxiv}
\begin{align*}
\beta ~|~x_{1:N} &= \Xmat \\
z_{1:N} = z &\sim \Norm\left(\Sigma \Xmat^{T}z,~\Sigma\right) \\
\Sigma &= \left(\sigma^{-2}I_{d} + \Xmat^{T}\Xmat\right)^{-1}.
\end{align*}
\end{arxiv}

We run two experiments: the first using a data generating distribution from~\citet{scott-2013-consensus}, with ${N = 8500}$ data points and ${d = 5}$ dimensions, and the second using ${N = 10^5}$ data points and ${d = 300}$ dimensions. As shown in Figure~\ref{fig:probit-error} and, in the supplement,\footnote{Due to space constraints, we relegate results for ${d = 5}$ to the supplement.} Figures~\ref{fig:probit} and~\ref{fig:probit-2pure}, VCMC decreases the error of
moment estimation compared to the baselines, with substantial gains
starting at ${K = 25}$ partitions (and increasing with~$K$).
We also run the high-dimensional experiment using partial posteriors~\citep{strathmann:2015-unbiased} in place of subposteriors, and observe substantially lower errors in this case.

\subsection{Normal-inverse Wishart model}

\begin{figure}[t!]
\centering
\includegraphics[trim = 18mm 80mm 20mm 10mm, width=0.75\textwidth]{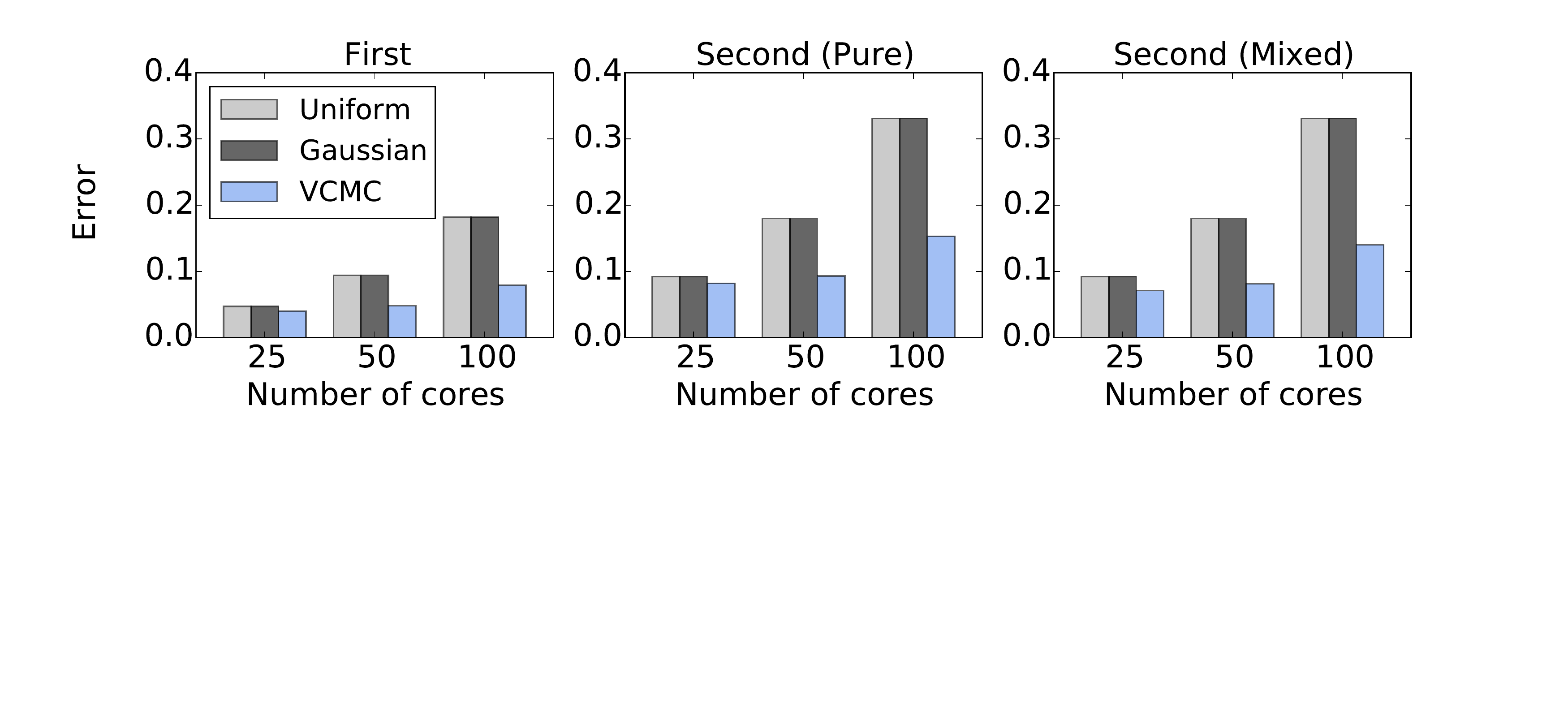}
\includegraphics[trim = 15mm 20mm 10mm 10mm, width=0.22\textwidth]{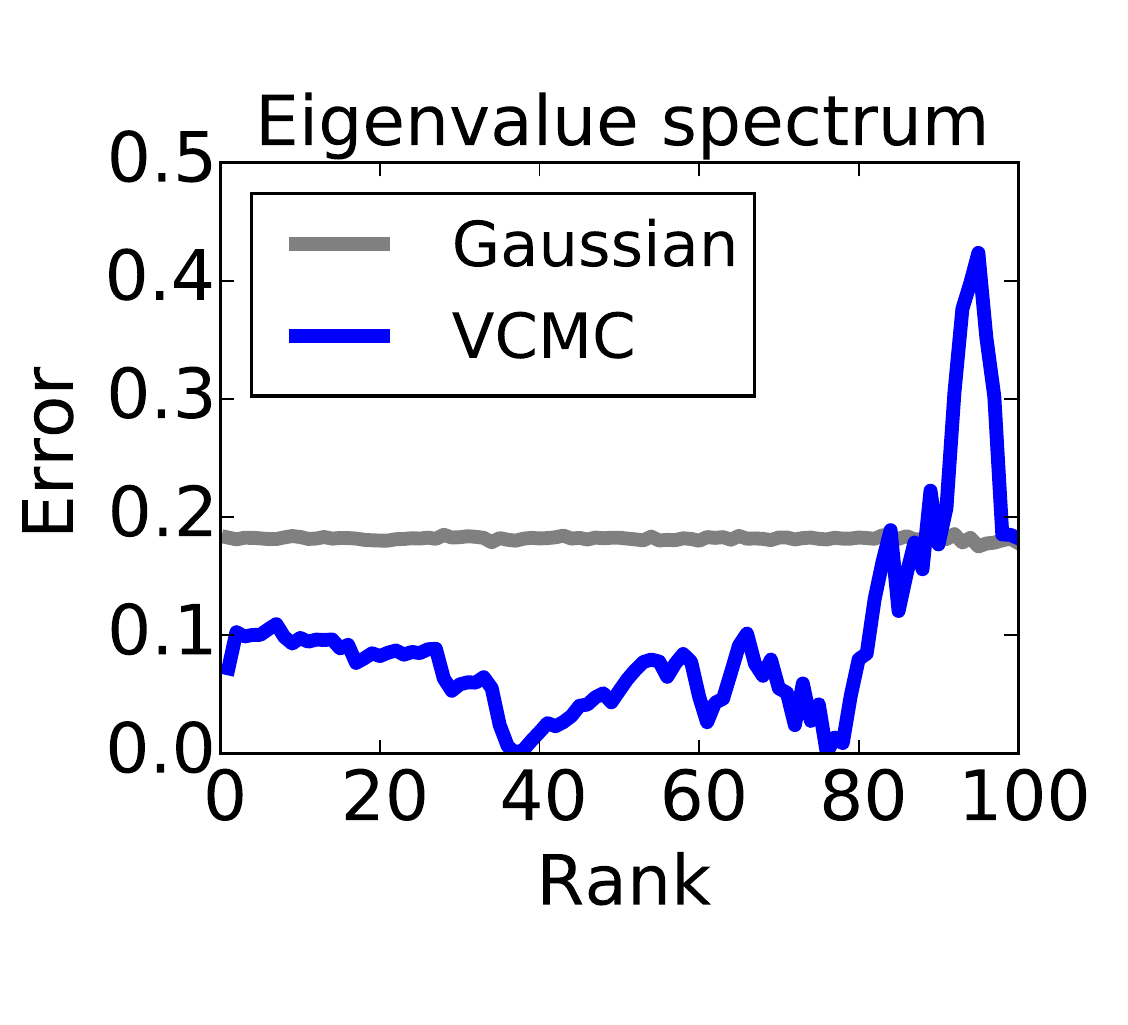}
\caption{High-dimensional normal-inverse Wishart model ($d = 100$). \emph{(Far left,~left,~right)} Moment approximation error for the uniform and Gaussian averaging baselines and VCMC, relative to serial MCMC. Letting $\rho_{j}$ denote the $j^{\mathrm{th}}$ largest eigenvalue of $\Lambda^{-1}$, we assessed three groups of functions: first moments, with $f(\Lambda) = \rho_{j}$ for $1 \leq j \leq d$; pure second moments, with $f(\Lambda) = \rho_{j}^{2}$ for $1 \leq j \leq d$; and mixed second moments, with $f(\Lambda) = \rho_{i}\rho_{j}$ for $1 \leq i < j \leq d$. \emph{(Far right)} Graph of error in estimating $\E\left[\rho_{j}\right]$ as a function of $j$ (where $\rho_{1} \geq \rho_{2} \geq \cdots \geq \rho_{d}$). \label{fig:norm-wish}}
\end{figure}

To compare directly to prior work~\citep{scott-2013-consensus},
we consider the normal-inverse Wishart model
\begin{paper}
\begin{align*}
\Lambda \sim \Wish\left(\nu,~ V\right), \qquad
X_{n} ~|~ \mu,~\Lambda \sim \Norm\left(\mu,~\Lambda^{-1}\right) .
\end{align*}
\end{paper}
\begin{arxiv}
\begin{align*}
\Lambda &\sim \Wish\left(\nu,~ V\right) \\
X_{n} ~|~ \mu,~\Lambda &\sim \Norm\left(\mu,~\Lambda^{-1}\right) .
\end{align*}
\end{arxiv}
%
Here, we use spectral aggregation rules as our function class, restricting to diagonal~$W_{k}$ for computational efficiency. We run two sets of experiments:
one using the covariance matrix from~\citet{scott-2013-consensus}, with ${N = 5000}$ data points and ${d = 5}$ dimensions, and one
using a higher-dimensional covariance matrix designed to have a small spectral gap and a range of eigenvalues, with ${N = 10^5}$ data points and ${d = 100}$ 
dimensions. In both cases, we use a form of projected SGD, using~$40$ samples per iteration to estimate the variational gradients
and running~$25$ iterations of optimization. We note that because the mean~$\mu$ is treated as a point-estimated parameter, one could sample~$\Lambda$ exactly using normal-inverse Wishart conjugacy~\citep{gelman:2013-bda}.
As Figure~\ref{fig:norm-wish} shows,\footnote{Due to space constraints, we compare to the $d=5$ experiment of~\citet{scott-2013-consensus} in the supplement.} VCMC improves both first and second posterior moment estimation as compared to the baselines.
Here, the greatest gains from VCMC appear at large numbers of partitions ($K = 50, 100$). We also note that uniform and Gaussian averaging perform similarly because the variances do not differ much across partitions.

\subsection{Mixture of Gaussians}

\begin{arxiv}
\begin{figure}[t!]
\centering
\includegraphics[width=0.66\textwidth]{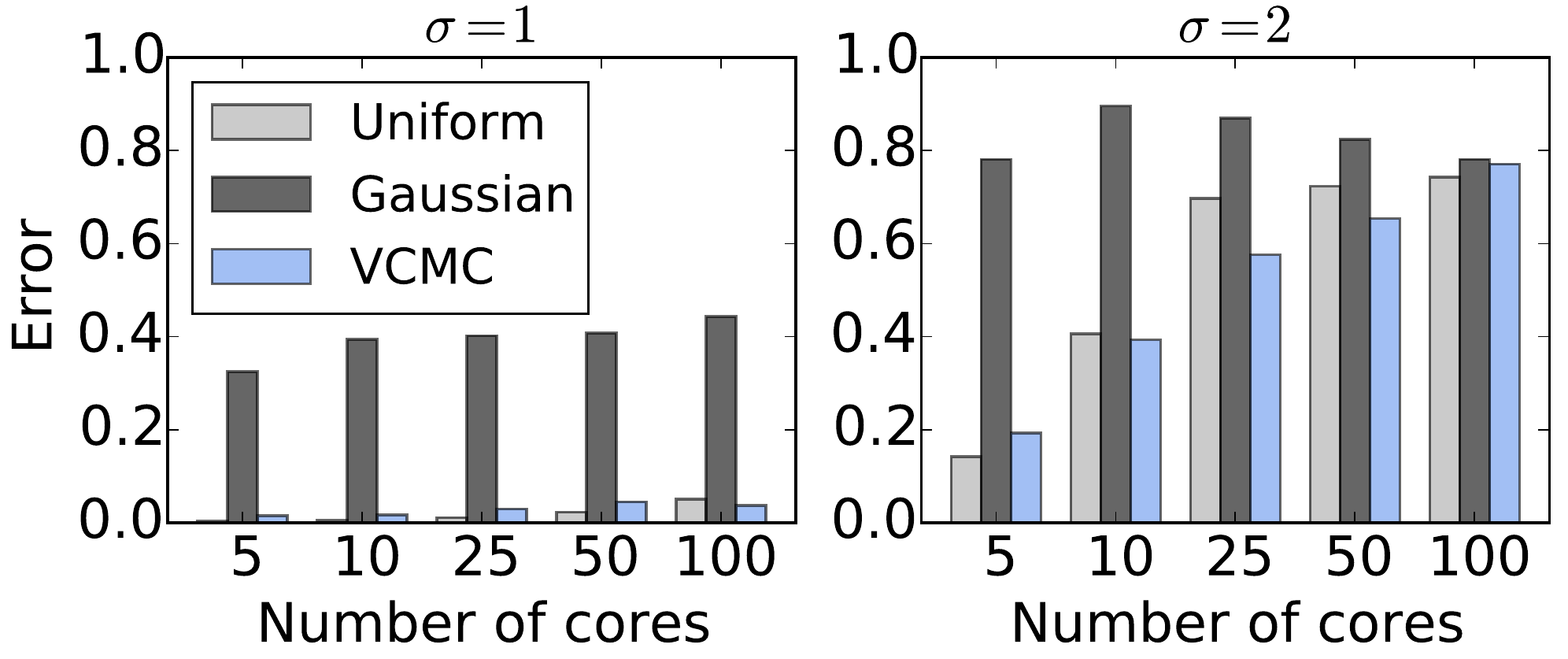}
\caption{Mixture of Gaussians $(d = 8, L = 8)$.
Expectation approximation error for the uniform and Gaussian baselines and VCMC.
We report the median error, relative to serial MCMC,
for cluster comembership probabilities of pairs of test data points,
for \emph{(left)} $\sigma= 1$ and \emph{(right)} $\sigma = 2$,
where we run the VCMC optimization procedure for 50 and 200 iterations, respectively.
When $\sigma = 2$, some comembership probabilities are estimated poorly by all methods; we therefore only use the 70\% of
comembership probabilities with the smallest errors across all the methods.\label{fig:mixture}
}
\end{figure}
\end{arxiv}

A substantial portion of Bayesian inference focuses on latent variable models and, in particular, mixture models. We therefore evaluate VCMC on the mixture of Gaussians model defined by
\begin{paper}
\begin{align*}
\theta_{1:L} \sim \Norm\left(0,~\tau^{2}I_{d}\right), \qquad
Z_{n} \sim \mathrm{Cat}\left(\pi\right), \qquad
X_{n} ~|~Z_{n} = z \sim \Norm\left(\theta_{z},~\sigma^{2}I_{d}\right) ,
\end{align*}
\end{paper}
\begin{arxiv}
\begin{align*}
\theta_{1:L} &\sim \Norm\left(0,~\tau^{2}I_{d}\right) \\
Z_{n} &\sim \mathrm{Cat}\left(\pi\right) \\
X_{n} ~|~Z_{n} = z &\sim \Norm\left(\theta_{z},~\sigma^{2}I_{d}\right) ,
\end{align*}
\end{arxiv}
where the mixture weights $\pi$ and the prior and likelihood variances $\tau^{2}$ and $\sigma^{2}$ are assumed known.
We use the combinatorial aggregation functions defined in Section \ref{sec:aggregation}; we set $L = 8$, $\tau = 2$, $\sigma = 1$, and $\pi$ uniform and generate $N = 5 \times 10^4$ data points
in $d = 8$ dimensions, using the model from~\citet{nishihara:2014-gess}. The resulting inference problem is therefore $L \times d = 64$-dimensional. All samples were drawn using the PyStan implementation of Hamiltonian Monte Carlo (HMC). 

As Figure~\ref{fig:mixture} shows, VCMC drastically improves moment estimation compared to the baseline Gaussian averaging~\eqref{eq:base-invv}.
To assess how VCMC influences estimates in cluster membership probabilities, we generated~$100$ new test points from the model and analyzed cluster comembership probabilities for all pairs in the test set. Concretely, for each $x_{i}$ and $x_{j}$ in the test data,
we estimated $\mathbb{P}\left[x_{i}~\text{and}~x_{j}~\text{belong to the same cluster}\right]$.
Figure~\ref{fig:mixture} shows the resulting boost in accuracy:
when ${\sigma = 1}$, VCMC delivers estimates close to those of serial MCMC, across all numbers of partitions;  the errors are larger for ${\sigma = 2}$.
Unlike previous models, uniform averaging here outperforms Gaussian averaging, and indeed is competitive with VCMC. 

\subsection{Assessing computational efficiency}

\begin{paper}
\begin{figure}[t!]
\centering
\begin{subfigure}[b]{0.41\textwidth}
\includegraphics[width=\textwidth]{mixture-test-data-error-cropped.pdf}
\caption{Mixture of Gaussians $(d = 8, L = 8)$.\label{fig:mixture}}
\end{subfigure}
\begin{subfigure}[b]{0.58\textwidth}
\includegraphics[width=\textwidth]{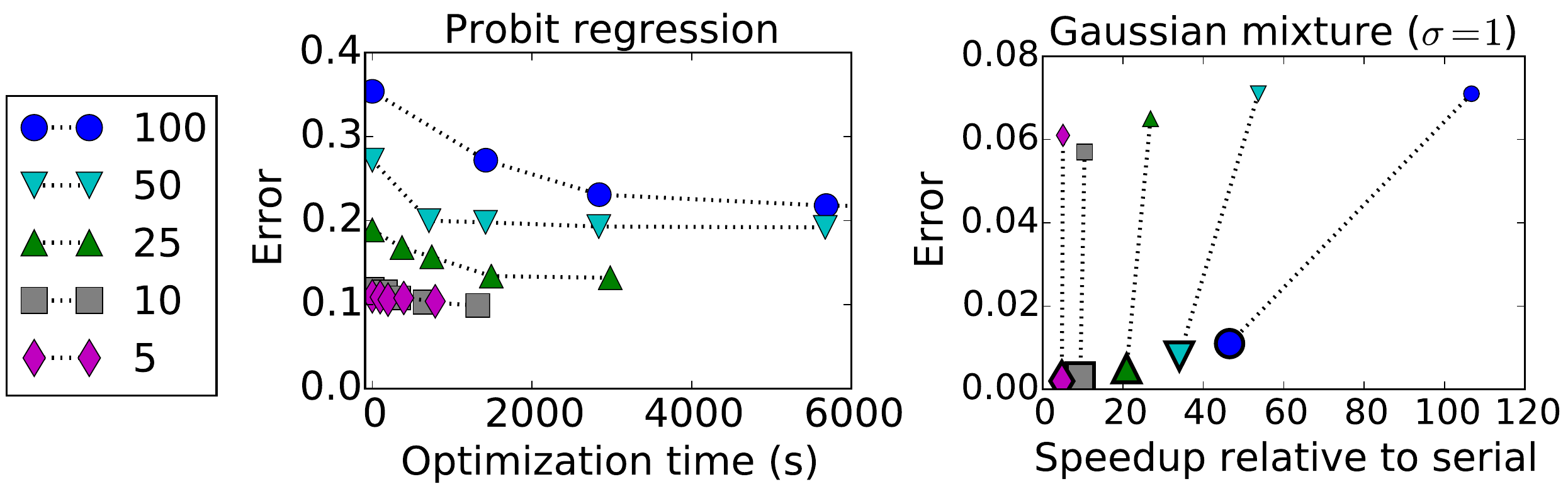}
\caption{Error versus timing and speedup measurements.\label{fig:timing}}
\end{subfigure}
\caption{(a) Expectation approximation error for the uniform and Gaussian baselines and VCMC.
We report the median error, relative to serial MCMC,
for cluster comembership probabilities of pairs of test data points,
for \emph{(left)} $\sigma= 1$ and \emph{(right)} $\sigma = 2$,
where we run the VCMC optimization procedure for 50 and 200 iterations, respectively.
When $\sigma = 2$, some comembership probabilities are estimated poorly by all methods; we therefore only use the 70\% of
comembership probabilities with the smallest errors across all the methods.
(b) \emph{(Left)} VCMC error as a function of number of seconds of optimization. The cost of optimization is nonnegligible, but still moderate compared to serial MCMC---particularly since our optimization scheme only needs small batches of samples and can therefore operate concurrently with the sampler. \emph{(Right)} Error versus speedup relative to serial MCMC, for both CMC with Gaussian averaging (small markers) and VCMC (large markers).
}
\end{figure}
\end{paper}

The efficiency of VCMC depends on that of the optimization step, which depends on factors including the step size schedule, number of samples used per iteration to estimate gradients, and size of data minibatches used per iteration.
Extensively assessing the influence of all these factors is beyond the scope of this paper, and is an active area of research both in general and specifically in the context of variational inference~\citep{johnson:2013-svrg,mandt:2014-smoothed,ranganath:2013-adaptive}.
Here, we provide an initial assessment of the computational efficiency of VCMC, taking the probit regression and Gaussian mixture models as our examples, using step sizes and sample numbers from above, and eschewing minibatching on data points.

Figure~\ref{fig:timing} shows timing results for both models. 
For the probit regression, while the optimization cost is not negligible, it is significantly smaller than that of serial sampling, which takes over $6000$ seconds to produce $1000$ effective samples.\footnote{We ran the sampler for $5100$ iterations, including $100$ burnin steps, and kept every fifth sample.}
Across most numbers of partitions, approximately $25$ iterations---corresponding to less than $1500$ seconds of wall clock time---suffices to give errors close to those at convergence.
For the mixture, on the other hand, the computational cost of optimization is minimal compared to serial sampling. We can see this in the overall speedup of VCMC relative to serial MCMC:
for sampling and optimization combined, low numbers of partitions $(K \leq 25)$ achieve speedups close to the ideal value of~$K$, and large numbers $(K = 50,~100)$ still achieve good speedups of about~$K/2$.
The cost of the VCMC optimization step is thus moderate---and, when the MCMC step is expensive, small enough to preserve the linear speedup of embarrassingly parallel sampling.
Moreover, since the serial bottleneck is an optimization, we are optimistic that performance, both in terms of number of iterations and wall clock time, can be significantly increased by using techniques like data minibatching~\citep{duchi:2011-adagrad}, adaptive step sizes~\citep{ranganath:2013-adaptive}, or asynchronous updates~\citep{niu:2011-hogwild}.

\begin{arxiv}
\begin{figure}[t!]
\centering
\includegraphics[width=\textwidth]{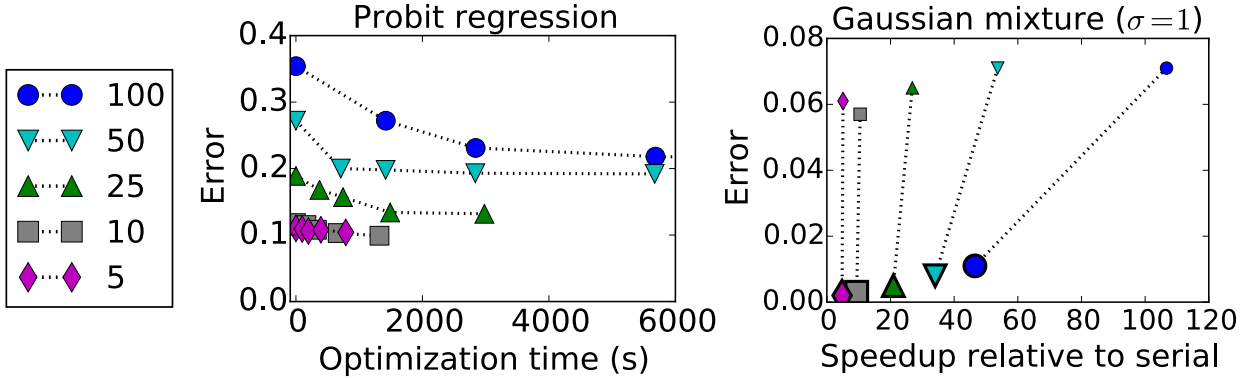}
\caption{Error versus timing and speedup measurements.
\emph{(Left)} VCMC error as a function of number of seconds of optimization. The cost of optimization is nonnegligible, but still moderate compared to serial MCMC---particularly since our optimization scheme only needs small batches of samples and can therefore operate concurrently with the sampler. \emph{(Right)} Error versus speedup relative to serial MCMC, for both CMC with Gaussian averaging (small markers) and VCMC (large markers).
\label{fig:timing}}
\end{figure}
\end{arxiv}

\section{Conclusion and future work}
\label{sec:conclusion}

The flexibility of variational consensus Monte Carlo (VCMC) opens several avenues for further research.
Following previous work on data-parallel MCMC, we used the subposterior factorization.
Our variational framework can accomodate more general factorizations that might be more statistically or computationally efficient -- e.g. the factorization used by \citet{broderick:2013-svb}.
We also introduced structured sample aggregation, and analyzed some concrete instantiations.
Complex latent variable models would require more sophisticated aggregation functions -- e.g. ones that account for symmetries in the model~\citep{campbell:2014-decentralized} or lift the parameter to a higher dimensional space before aggregating.
%
Finally, recall that our algorithm -- again following previous work -- aggregates in a sample-by-sample manner, cf.~\eqref{eq:weighted-average}. Other aggregation
paradigms may be useful in building approximations to multimodal posteriors or in boosting the
statistical efficiency of the overall sampler.


\begin{arxiv}
\subsubsection*{Acknowledgments}
We thank R.P. Adams, N. Altieri, T. Broderick, R. Giordano, M.J. Johnson, and S.L. Scott for for helpful discussions.
Support for this project was provided by ONR under the Multidisciplinary University Research Initiative (MURI) program (N00014-11-1-0688).
E.A. is supported by the Miller Institute for Basic Research in Science, University of California, Berkeley. M.R. is supported by
a Fannie and John Hertz Foundation Fellowship and an NSF Graduate Research Fellowship.
This research is supported in part by NSF CISE Expeditions Award CCF-1139158, LBNL Award 7076018, and DARPA XData Award FA8750-12-2-0331, and gifts from Amazon Web Services, Google, SAP, The Thomas and Stacey Siebel Foundation, Adatao, Adobe, Apple, Inc., Blue Goji, Bosch, C3Energy, Cisco, Cray, Cloudera, EMC2, Ericsson, Facebook, Guavus, HP, Huawei, Informatica, Intel, Microsoft, NetApp, Pivotal, Samsung, Schlumberger, Splunk, Virdata and VMware.
\end{arxiv}

\begin{arxiv}
\bibliography{refs}
\end{arxiv}

\begin{paper}
\small{
\bibliography{paper}
}
\end{paper}

\bibliographystyle{abbrvnat}

\newpage
\appendix
\setcounter{table}{0}
\renewcommand{\thetable}{C\arabic{table}}

\section{Proofs}

\begin{proof}[Proof of entropy relaxation]
We apply the entropy power inequality~\citep{cover:2006-info-theory}, which asserts that for independent $d$-dimensional random vectors $\psi_{1:K}$, the sum
$$ \psi = \sum_{k = 1}^{K} \psi_{k} $$
satisfies
\begin{equation}\label{eq:ent-power}
e^{\frac{2h\left(\psi\right)}{d}} \geq \sum_{k = 1}^{K} e^{\frac{2h\left(\psi_{k}\right)}{d}} \geq \max_{1 \leq k \leq K} e^{\frac{2h\left(\psi_{k}\right)}{d}} ,
\end{equation}
where $h$ denotes differential entropy. 

In our case, we have
$$ \psi_{k} = F_{k}\left(\theta_{k}\right) $$
and
$$ \psi = \theta = F\left(\theta_{1}, \dots, \theta_{K}\right) $$
Since
$$ \ent\left[q\right] = h\left(\psi\right), $$
equation \eqref{eq:ent-power} implies
$$ \ent\left[q\right] \geq \max_{1 \leq k \leq K} h\left(\psi_{k}\right) = \max_{1 \leq k \leq K} \left(\ent\left[p_{k}\right] + \E_{p_k}\left[\log\det J\left(F_{k}\right)\left(\theta_{k}\right)\right] \right) . $$
Defining
$$ \tilde{\ent}\left[q\right] = \frac{1}{K}\sum_{k = 1}^{K}\E_{p_k}\left[\log\det J\left(F_{k}\right)\left(\theta_{k}\right)\right] + \min_{1 \leq k \leq K} \ent\left[p_{k}\right], $$
we immediately see that
$$ \ent\left[q\right] \geq \tilde{\ent}\left[q\right], $$
as required.
\end{proof}

\begin{proof}[Proof of Theorem \ref{thm:conc-Eq}]
We first define
$$ \varobj_{0}\left(q\right) = \E_{q}\left[\log p\left(\theta,~X\right)~|~\theta_{1:K}\right] = \log p\left(F\left(\theta_{1:K}\right),~X\right) . $$
Since 
${ \varobj\left(q\right) = \E_{p_{1:K}}\left[\varobj_{0}\left(q\right)\right] }$,
where the expectation is taken with respect to the subposteriors, which do not vary with $q$, it suffices to show that $\varobj_{0}$ is concave in each $F^{u}$ individually for each fixed $\theta_{1:K}$. Furthermore, 
since $F\left(\theta_{1:K}\right)$ is linear in $F$ by the definition of function addition, it actually suffices to show
${ \ell\left(\theta\right) = \log p\left(F\left(\theta_{1:K}\right),~X\right) }$
in each $\theta^{u}$ individually.
To see why this holds, first observe that for each $u \in V(G)$, we have
\begin{align}
\ell\left(\theta\right) = & \log h^{u}\left(\theta^{u}\right) + \sum_{u' \in \parents(u)} \left(\theta^{u'}\right)^{T}T^{u' \rightarrow u}\left(\theta^{u'}\right) \\
& + \sum_{v \in \child(u)} \left[\left(\theta^{u}\right)^{T}T^{u \rightarrow v}\left(\theta^{v}\right) - \log{A^{v}}\left(\theta^{\parents(v)}\right)\right] + c_{u},
\label{eq:ell}
\end{align}
where $c_{u}$ is a function of $\theta$ that is constant in $\theta^{u}$. By the log-concavity assumption,
the sum of the first two terms of~$\ell\left(\theta\right)$ in~\eqref{eq:ell}
is concave in $\theta^{u}$. On the other hand, by basic properties of exponential families, each $\log{A^{v}}\left(\theta^{\parents(v)}\right)$ is convex in $\theta^{\parents(v)}$ and hence in $\theta^{u}$, making its negative concave. Since the remaining terms are linear or constant, $\ell$ is in fact concave in $\theta^{u}$. The claim follows.
\end{proof}

\begin{proof}[Proof of Theorem \ref{thm:conc-ent}]
Clearly it suffices to show that each $\E_{p_k}\left[\log\det J\left(F_{k}\right)\left(\theta_{k}\right)\right] $ is concave and for this it suffices to show that
for fixed $\theta_{k}$, $\log\det J\left(F_{k}\right)\left(\theta_{k}\right)$ is concave. This is immediate, however, since the Jacobian is a linear function
and $\log\det$ is a concave function.
\end{proof}

\section{Variational objective functions}

We derive the variational objectives and gradients for the models we analyze. Throughout, we make the convention that for $A,~B \in \R^{d \times d}$,
$$ \lmang A,~B\rmang = \mathrm{Tr}\left(AB\right) $$
denotes the trace inner product.

\subsection{Bayesian probit regression}

In this section, we compute the variational objective for the Bayesian probit regression model. For convenience, we define
$$ \mu_{k} = \E_{p_k}\left[\beta_k\right] ~~ \text{and}~~ S_{k} = \E_{p_k}\left[\beta_{k}\beta_{k}^{T}\right]. $$
In this notation, the variational objective takes the simple form
\begin{align*}
\varobj(W) & =  -\frac{1}{2\sigma^{2}}\sum_{k = 1}^{K} \Bigg[ \lmang S_{k},~ W_{k}^{T}W_{k}\rmang + 2\sum_{\ell \neq k} \lmang \mu_{k}\mu_{\ell}^{T}W_{\ell}^{T},~ W_{k} \rmang \Bigg] \\
                 &~~~~~~~~~~~~~~~~~ + \sum_{n = 1}^{N} \Bigg[y_{n} \cdot\E_{q}\left[\log\Phi_n\right] + (1 - y_n) \cdot\E_{q}\left[\log\left(1 - \Phi_n\right)\right] \Bigg] \\
                 &~~~~~~~~~~~~~~~~~ + \frac{1}{K}\sum_{k = 1}^{K} \log\det\left(W_{k}\right)
\end{align*}
where $\Phi_{n} = \Phi\left(\sum_{k} \lmang W_{k},~ \beta_{k}x_{n}^{T} \rmang\right)$.

This leads to the gradients
\begin{align*}
\nabla_{W_k}\varobj &= \frac{1}{\sigma^{2}} \Bigg[ S_{k}W_{k}^{T} + \sum_{\ell \neq k} \left(\mu_{k}\mu_{\ell}^{T} W_{\ell}^{T} + W_{\ell}\mu_{\ell}\mu_{k}^{T}\right)\Bigg] \\
   &~~~~~~~~~~ + \sum_{n = 1}^{N} \E_{q}\left[\left(\frac{\phi_n}{\Phi_{n}\left(1 - \Phi_n\right)} \cdot \left(y_{n} - \Phi_{n}\right)\right) \cdot \beta_{k}\right]x_{n}^{T} \\
   &~~~~~~~~~~ + \frac{W_{k}^{-1}}{K},            
\end{align*}
where we have additionally defined $\phi_{n} = \phi\left(\sum_{k = 1}^{K} \lmang W_{k},~ \beta_{k}x_{n}^{T} \rmang\right)$ and
$$ \beta = \sum_{k = 1}^{K} W_{k}\beta_{k} . $$

\subsection{Normal-inverse Wishart model}

The variational objective for the normal-inverse Wishart model takes the form
$$ \varobj\left(W\right) = \E_{q}\left[\logl\left(W,~\Lambda_{1:K}\right)\right] + \tilde{\ent}\left[q\right], $$
where
\begin{align*}
\logl\left(W\right) & = -\frac{1}{2}\sum_{k = 1}^{K} \lmang R_{k}\left(V^{-1} + X^{T}X\right)R_{k}^{T},~ W_{k}D_{k}\rmang \\
                                   &~~~~~~~~~~ + \frac{N}{2}\sum_{k = 1}^{K} \lmang R_{k}\left(\mu\bar{x}^{T} + \bar{x}\mu^{T}\right),~ W_{k}D_{k}\rmang  -  \frac{N}{2}\sum_{k = 1}^{K} \lmang \left(R_{k}\mu\right)\left(R_{k}\mu\right)^{T},~ W_{k}D_{k}\rmang \\
                                   &~~~~~~~~~~ + \frac{\nu + N - d - 1}{2} \cdot \log\det\left(\sum_{k = 1}^{K} R_{k}^{T}\left[W_{k}D_{k}\right]R_{k}\right) ,
\end{align*}
and we have compressed our notation by setting ${\mu = \sum_{k} A_{k}\mu_{k}}$, ${\bar{x} = \frac{1}{N}\sum_{n} x_{n}}$, ${R_{k} = R\left(\Lambda_{k}\right)}$, and~${D_{k} = D\left(\Lambda_{k}\right)}$. As before, we have
$$ \tilde{\ent}\left[q\right] = \frac{1}{K}\sum_{k = 1}^{K} \log\det\left(W_{k}\right), $$
where we have suppressed the constant depending on the $p_{1:K}$ since it does not vary with $W_{k}$. 

Recalling that $W_{k}$ is diagonal, we can obtain the gradients by first computing
\begin{align*}
\nabla_{W_k}\logl\left(W\right) & = D_{k} \cdot \diag\left[R_{k}\left(V^{-1} + X^{T}X\right)R_{k}^{T}\right] \\
                                              &~~~~~~~~~~ + \frac{N}{2} \cdot D_{k}\left(R_{k}\mu \circ \bar{x} + R_{k}\bar{x} \circ \mu\right) - \frac{N}{2} \cdot D_{k}\left(R_{k}\mu\right) \circ \left(R_{k}\mu\right) \\
                                              &~~~~~~~~~~ + \frac{\nu + N - d - 1}{2} \cdot D_{k} \cdot \diag\left[R_{k}\left(\sum_{\ell = 1}^{K} R_{\ell}^{T}\left[W_{\ell}D_{\ell}\right]R_{\ell}\right)^{-1}R_{k}^{T}\right],
\end{align*}
where we have used $\circ$ to denote elementwise vector products. We then find
\begin{align*}
\nabla_{W_k}\varobj & = \E_{q}\left[\nabla_{W_k}\logl\left(W\right)\right] + \frac{W_{k}^{-1}}{K} . 
\end{align*}

\subsection{Mixture of Gaussians}

Per the description of aggregation in Section~\ref{sec:aggregation}, we define merged samples in the mixture of Gaussians model by the equations
\begin{equation*}
\theta_{\ell}^{\ast} = F_{a\ell}\left(\theta_{1:K,1:L}\right) = \sum_{k = 1}^{K} W_{k\ell}\theta_{ka_{k\ell}},
\end{equation*}
where $\ell = 1, \dots, L$ denotes the cluster index and $a_{k}$ denotes the alignment mapping indices on the master core to indices on worker core $k$. 
Throughout this section, we treat the alignment variables as fixed.

Using this notation, we define
\begin{align*}
\varobj_{0}\left(W,~\theta_{1:K,1:L}\right) & = -\frac{1}{2\tau^{2}}\sum_{\ell = 1}^{L} \left|\left|\theta_{\ell}^{\ast}\right|\right|_{2}^{2} - \frac{1}{2\sigma^{2}}\sum_{\ell = 1}^{L}\sum_{i = 1}^{n} \gamma_{i\ell}\left(W\right)\left|\left|\theta_{\ell}^{\ast} - x_{i}\right|\right|_{2}^{2} ,
\end{align*}
where
$$ \gamma_{n\ell} = \frac{\tilde{\gamma}_{n\ell}}{\sum_{\ell' = 1}^{L} \tilde{\gamma}_{n\ell'}} $$
and
$$ \tilde{\gamma}_{n\ell} = \exp\left(-\frac{1}{2\sigma^{2}}\left|\left|\theta_{\ell}^{\ast} - x_{n}\right|\right|_{2}^{2}\right) . $$

The variational objective then takes the form
$$ \varobj\left(W\right) = \E_{p_{1:K}}\left[\varobj_{0}\left(W,~ \theta_{1:K,1:L}\right)\right] + \tilde{\ent}\left[q\right],  $$
with the usual equation
$$ \tilde{\ent}\left[q\right] = \frac{1}{K}\sum_{k = 1}^{K}\sum_{\ell = 1}^{L} \log\det\left(W_{k\ell}\right) . $$

Some calculation then shows that the gradients with respect to the various $W_{k\ell}$ are given by
\begin{align*}
\nabla_{k\ell}\varobj_{0}\left(W,~\theta_{1:K,1:L}\right) & = \frac{1}{2\sigma^{4}}\sum_{n = 1}^{N} \gamma_{n\ell}\left(1 - \gamma_{i\ell}\right)\left|\left|\theta^{\ast}_{\ell} - x_{n}\right|\right|_{2}^{2} \cdot \theta_{ka_{k\ell}}\left(\theta_{\ell}^{\ast} - x_{n}\right)^{T} \\
                                                                                    &~~~~~~~~ - \left(\frac{1}{\tau^{2}} + \frac{\sum_{n = 1}^{N} \gamma_{n\ell}}{\sigma^{2}}\right) \cdot \theta_{ka_{k\ell}}\left(\theta_{\ell}^{\ast} - \tilde{x}_{\ell}\right)^{T} ,
\end{align*}
where 
$$ \tilde{x}_{\ell} = \left(\frac{1}{\tau^{2}} + \frac{\sum_{n = 1}^{N} \gamma_{n\ell}}{\sigma^{2}}\right)^{-1}\sum_{n= 1}^{N} \frac{\gamma_{n\ell}}{\sigma^{2}} \cdot x_{n} . $$

This covers the case of general PSD matrices $W_{k\ell}$. When the matrices are restricted to be diagonal, we get the simplified gradient
\begin{align*}
\nabla_{k\ell}\varobj_{0}\left(W,~\theta_{1:K,1:L}\right) & = \frac{1}{2\sigma^{4}}\sum_{n = 1}^{N} \gamma_{i\ell}\left(1 - \gamma_{n\ell}\right)\left|\left|\theta^{\ast}_{\ell} - x_{n}\right|\right|_{2}^{2} \cdot \theta_{ka_{k\ell}} \circ \left(\theta_{\ell}^{\ast} - x_{n}\right) \\
                                                                                    &~~~~~~~~ - \left(\frac{1}{\tau^{2}} + \frac{\sum_{n = 1}^{N} \gamma_{n\ell}}{\sigma^{2}}\right) \cdot \theta_{ka_{k\ell}} \circ \left(\theta_{\ell}^{\ast} - \tilde{x}_{\ell}\right) ,
\end{align*}
where $\circ$ denotes elementwise multiplication of vectors.

Since
$$ \nabla_{k\ell}\varobj\left(W\right) = \E_{p_{1:K}}\left[\nabla_{k\ell}\varobj\left(W,~\theta_{1:K,1:L}\right)\right] + \frac{W_{k\ell}^{-1}}{K}, $$
this gives us all the information we need to implement an optimization procedure for the objective.

\newpage
\section{Extended empirical evaluation}

\vspace{5em}


\begin{figure}[h!]
\centering
\includegraphics[trim = 10mm 70mm 10mm 10mm, width=\textwidth]{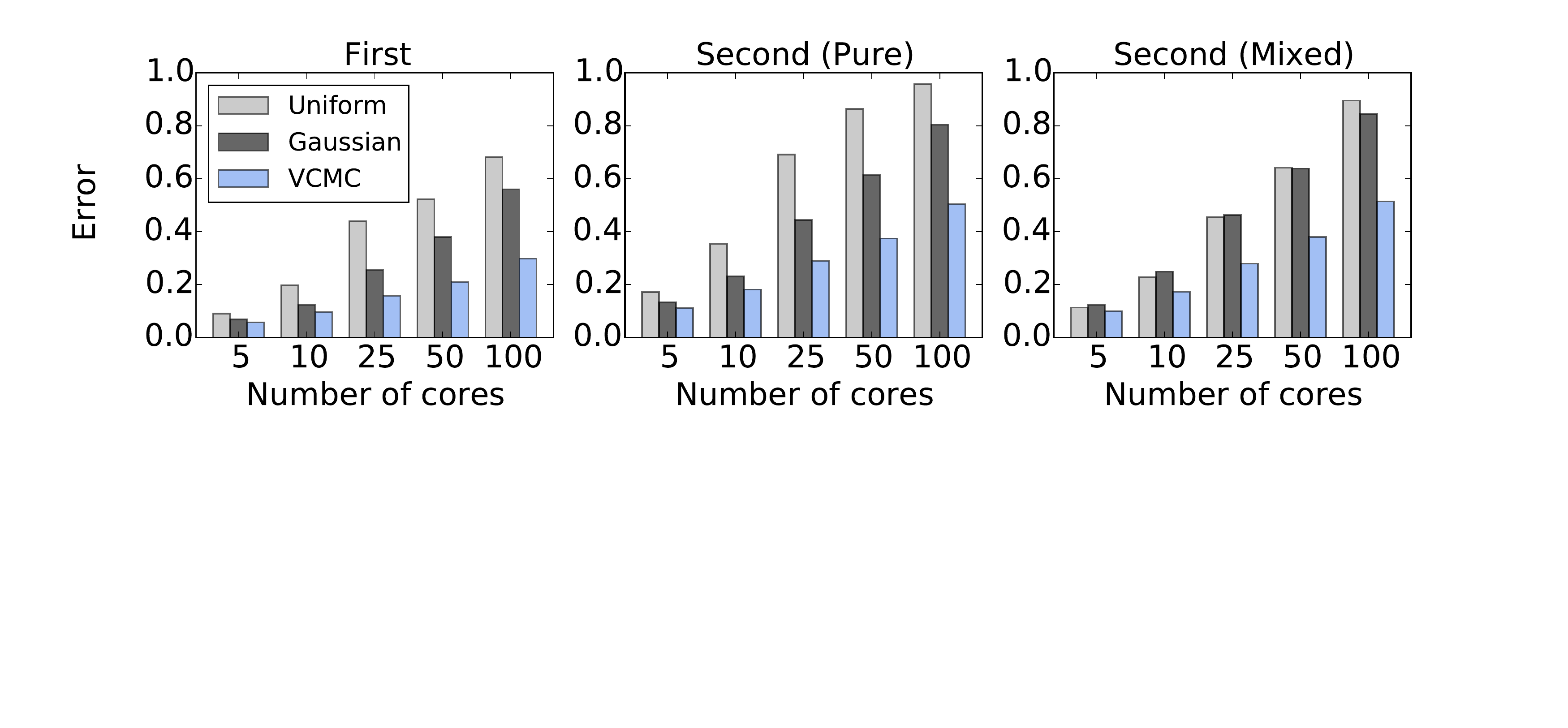}
\caption{Five-dimensional probit regression $(d = 5)$. Moment approximation error for the uniform and Gaussian averaging baselines and VCMC, relative to serial MCMC. We assessed three groups of functions: \emph{(left)}~first moments, with $f(\beta) = \beta_{j}$ for $1 \leq j \leq d$; \emph{(center)}~pure second moments, with $f(\beta) = \beta_{j}^{2}$ for $1 \leq j \leq d$; and \emph{(right)}~mixed second moments, with $f(\beta) = \beta_{i}\beta_{j}$ for $1 \leq i < j \leq d$.\label{fig:probit}}
\end{figure}

\vspace{5em}

\begin{figure}[h!]
\centering
\includegraphics[width=0.66\textwidth]{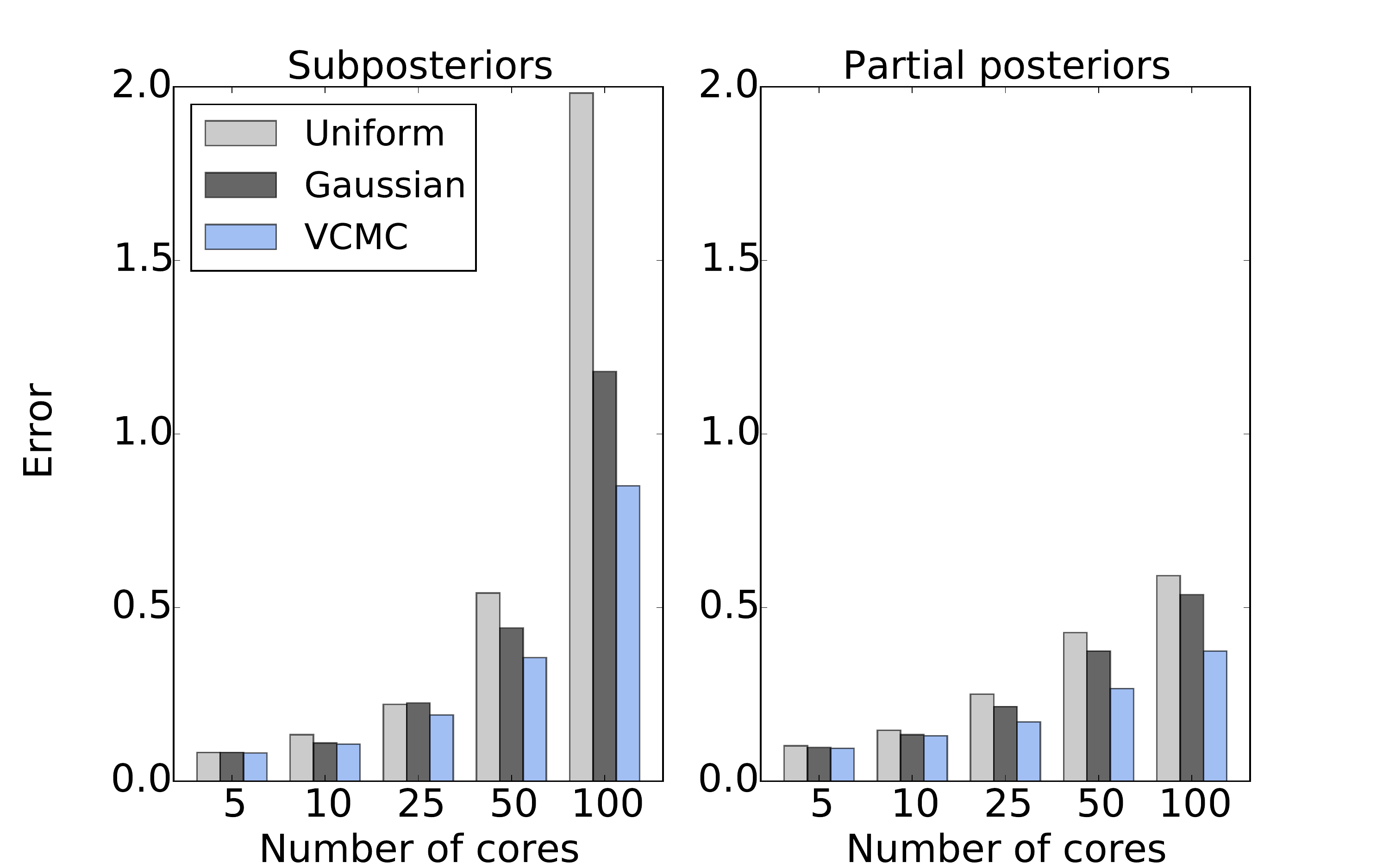}
\caption{High-dimensional probit regression ($d = 300$). Moment approximation error for the uniform and Gaussian averaging baselines and VCMC, relative to serial MCMC, for subposteriors~\emph{(left)} and partial posteriors~\emph{(right)}. Here we show the pure second moments.\label{fig:probit-2pure}}
\end{figure}

\newpage

\begin{figure}[h!]
\centering
\includegraphics[trim = 10mm 70mm 10mm 10mm, width=\textwidth]{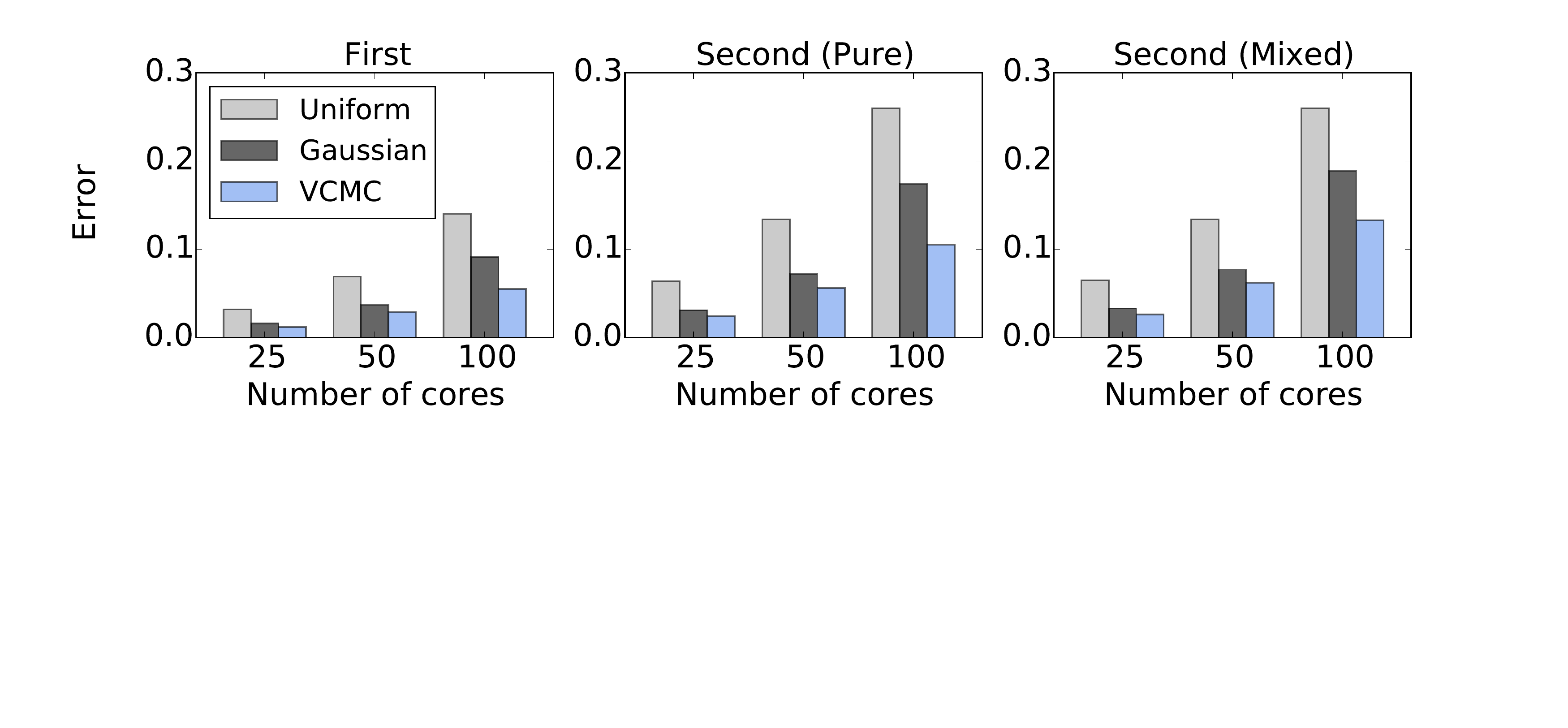}
\caption{Five-dimensional normal-inverse Wishart model ($d = 5$). Moment approximation error for the uniform and Gaussian averaging baselines and VCMC, relative to serial MCMC. Letting $\rho_{j}$ denote the $j^{\mathrm{th}}$ largest eigenvalue of $\Lambda^{-1}$, we assessed three groups of functions: \emph{(left)}~first moments, with $f(\Lambda) = \rho_{j}$ for $1 \leq j \leq d$; \emph{(center)}~pure second moments, with $f(\Lambda) = \rho_{j}^{2}$ for $1 \leq j \leq d$; and \emph{(right)}~mixed second moments, with $f(\Lambda) = \rho_{i}\rho_{j}$ for $1 \leq i < j \leq d$.\label{fig:norm-wish-5}}
\end{figure}

\end{document}